\documentclass{article}
\usepackage{ijcai26}

\usepackage{times}
\usepackage{soul}
\usepackage{url}
\usepackage[hidelinks]{hyperref}
\usepackage[utf8]{inputenc}
\usepackage[small]{caption}
\usepackage{graphicx}
\usepackage{amsmath}
\usepackage{amsthm}
\usepackage{booktabs}
\usepackage{algorithm}
\usepackage{algorithmic}
\usepackage[switch]{lineno}
\usepackage{amssymb} 
\usepackage{mathtools}
\usepackage{graphicx}
\usepackage{float}
\usepackage{placeins}
\usepackage{booktabs}
\usepackage{multirow}
\usepackage{tabularx}
\usepackage[table]{xcolor}

\usepackage{siunitx}
\usepackage{graphicx} 
\usepackage{subcaption} 
\usepackage{amsmath}    
\usepackage{amsfonts}   
\usepackage{wrapfig}
\usepackage{graphicx}
\usepackage{xcolor}
\definecolor{lightblue}{RGB}{0,114,188}  
\definecolor{myred}{RGB}{162,37,38} 

\newtheorem{theorem}{Theorem}
\newtheorem{lemma}{Lemma}
\newtheorem{definition}{Definition}
\newtheorem{proposition}{Proposition}

\title{VGAS: Value-Guided Action-Chunk Selection for Few-Shot Vision-Language-Action Adaptation}

\author{
Changhua Xu
\and
En Yu
\and
Junyu Xuan
\and
Jie Lu 
\\
\affiliations
Australian Artificial Intelligence
Institute (AAII), 
University of Technology Sydney, Australia
\\
\emails
changhua.xu@student.uts.edu.au;
\{en.yu-1, junyu.xuan, jie.lu\}@uts.edu.au
}

\begin{document}

\maketitle

\begin{abstract}
Vision--Language--Action (VLA) models bridge multimodal reasoning with physical control, but adapting them to new tasks with scarce demonstrations remains unreliable.
While fine-tuned VLA policies often produce semantically plausible trajectories, failures often arise from unresolved geometric ambiguities, where near-miss actions lead to divergent execution outcomes under limited supervision. 
We study few-shot VLA adaptation from a \emph{generation--selection} perspective and propose a novel framework \textbf{VGAS} (\textbf{V}alue-\textbf{G}uided \textbf{A}ction-chunk \textbf{S}election). It performs inference-time best-of-$N$ selection to identify action chunks that are both semantically faithful and geometrically precise. Specifically, \textbf{VGAS} employs a finetuned VLA as a high-recall proposal generator and introduces the \textrm{Q-Chunk-Former}, a geometrically grounded Transformer critic to resolve fine-grained geometric ambiguities.  In addition, we propose \textit{Explicit Geometric Regularization} (\texttt{EGR}), which shapes a discriminative value landscape to preserve action ranking resolution among near-miss candidates while mitigating value instability under scarce supervision. Experiments and theoretical analysis demonstrate that \textbf{VGAS} consistently improves success rates and robustness under limited demonstrations and distribution shifts. Our code is available at \url{https://github.com/Jyugo-15/VGAS}.

\end{abstract}

\section{Introduction}

Vision-Language-Action (VLA) models have emerged as a transformative paradigm for embodied AI, bridging multimodal reasoning with physical control~\cite{brohan2022rt,zitkovich2023rt,he2026finegrained}. By pretraining on vast robotic datasets, these generalist policies learn to map complex visual observations and linguistic instructions directly into executable actions~\cite{black2024pi0,kim2024openvla,team2024octo,intelligence2025pi05}. However, the reliability of VLAs in downstream applications remains heavily bottlenecked by the prevailing Supervised Fine-Tuning (SFT) paradigm~\cite{kim2025fine,zhang2025pure}. SFT-based adaptation demands a high volume of expert demonstrations to bridge the gap between generalist priors and task-specific requirements. This is often challenging in real-world environments where high-quality robotic data collection is costly, unscalable, and prone to out-of-distribution (OOD) uncertainties~\cite{dass2022pato,xin2024programmatic,sapkota2025vision,yu2026generalized}. Consequently, under data-scarce regimes, VLA policies often exhibit brittle performance, failing to generalize across even minor distribution shifts~\cite{liu2025can,li2025simplevla,yu2025learning,tan2505interactive,guo2025improving}.

\begin{figure}[!t]
  \centering
  \includegraphics[width=0.95\linewidth]{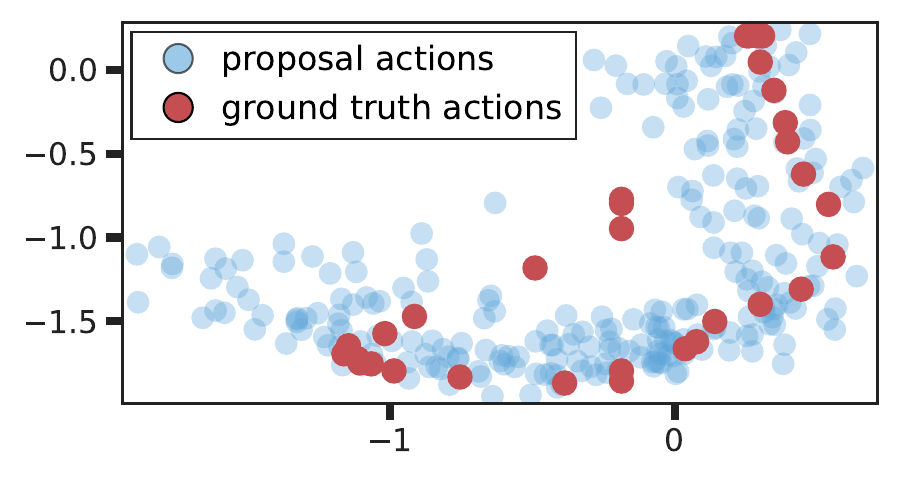}
  \caption{Illustration of near-miss action distribution under 5-shot VLA fine-tuning.}
  \label{fig:action_dis}
\end{figure}

To study this data-scarce regime, we simulate a realistic few-shot adaptation setting by fine-tuning a pretrained VLA policy with only five demonstrations per task, and evaluating it on held-out demonstrations with different initial spatial configurations (e.g., object positions). As shown in Figure~\ref{fig:action_dis}, the fine-tuned policy generally preserves the task semantics, producing actions toward the correct object. However, due to the limited state--action coverage, even slight variations in the initial spatial configuration can make the policy's action predictions less concentrated, yielding dispersed near-miss predictions around the ground-truth actions. Although semantically reasonable, these predictions are often geometrically imprecise and can cause failures such as inaccurate grasps, end-effector pose deviations, or joint-angle overshoot~\cite{kumar2022pre,zhao2023learning}. This suggests that few-shot VLA adaptation is primarily limited by geometric precision under sparse supervision, rather than semantic understanding alone. Motivated by this, we reformulate adaptation as a value-guided selection problem: instead of requiring a generative policy to jointly acquire semantic reasoning and fine-grained geometric control end-to-end, we decouple adaptation into high-recall proposal generation and high-precision value-based selection, prioritizing candidates with the highest likelihood of long-horizon success.

Offline Reinforcement Learning (ORL) provides a natural framework for this selection objective, as it learns an outcome-aware critic that maps long-horizon success into a scalar value signal~\cite{sutton1998reinforcement}—perfectly suited for ranking candidate proposals~\cite{ghasemipour2021emaq,janner2022planning,luodreamfuser}. However, applying existing ORL methodologies to modern VLA policies exposes two fundamental limitations: 
\textit{1) Structural and observational mismatch:} Standard RL assumes per-step atomic actions, whereas modern VLAs output temporally extended action chunks, inducing an SMDP structure that complicates value learning and temporal credit assignment~\cite{sutton1999between}. Moreover, most offline RL is evaluated with compact, near-Markovian state inputs; in contrast, VLA relies on high-dimensional vision--language observations with geometric grounding, making value estimation substantially harder and comparatively under-explored~\cite{fu2020d4rl,lu2022challenges}. \textit{2) Ranking resolution vs.\ conservatism:} Offline RL often controls extrapolation to low-support actions via conservative objectives or behavior-regularized extraction~\cite{kumar2020conservative,kostrikov2021offline}. In few-shot sparse-reward settings, however, such regularization can compress value gaps among proposal-supported near-miss candidates, yielding low-contrast gradients and weakening inference-time Best-of-$N$ selection~\cite{lyu2022mildly}. These limitations raise two major research questions: \textbf{\emph{RQ1.}} \emph{What critic architecture can robustly ground high-dimensional VLA observations into precise value estimates for temporally extended action chunks?} and \textbf{\emph{RQ2.}} \emph{How can a value function be trained under scarce demonstrations to maintain high ranking resolution among near-miss action chunks?}

To address these questions, we propose \textbf{VGAS} (\textbf{V}alue-\textbf{G}uided \textbf{A}ction-chunk \textbf{S}election) for VLA adaptation via generation--selection decoupling.  For \textbf{\emph{RQ1}}, we introduce \textrm{Q-Chunk-Former}, a geometrically grounded critic architecture built on a Transformer backbone. By leveraging the Transformer's sequence modeling capability, our design naturally captures temporal dependencies within action chunks. Crucially, our architecture preserves fine-grained, token-level features, allowing attention to explicitly focus on geometric cues that are critical for precise value estimation. 
For \textbf{\emph{RQ2}}, we propose a hybrid offline RL objective that \emph{anchors} temporal consistency via a proposal-constrained Bellman backup, augmented by Explicit Geometric Regularization (\texttt{EGR}). Unlike traditional conservative methods that indiscriminately penalize out-of-distribution actions, \texttt{EGR} injects dense geometric supervision, shaping the value landscape into a smooth funnel anchored at expert demonstrations. This allows the critic to maintain high ranking resolution among near-miss candidates even under scarce supervision. Our contributions are summarized as follows:
\begin{itemize}
    \item We reformulate few-shot VLA adaptation as a value-guided selection problem and propose the novel \textbf{VGAS} method, shifting the paradigm from likelihood-based generation to outcome-aware ranking.
    \item  We propose \textrm{Q-Chunk-Former}, which enables precise geometric grounding for action-chunk evaluation. We introduce \texttt{EGR}, a regularization technique that injects dense geometric priors into offline RL to maintain high ranking resolution under data-scarce regimes.
    \item We provide theoretical guarantees for the convergence of our chunk-level value operator and demonstrate through extensive experiments on the LIBERO benchmark that VGAS consistently outperforms SFT and standard ORL baselines, particularly in terms of success rate and robustness under distribution shifts.
\end{itemize}
\begin{figure*}[!htbp]
  \centering
  \includegraphics[width=.9\textwidth]{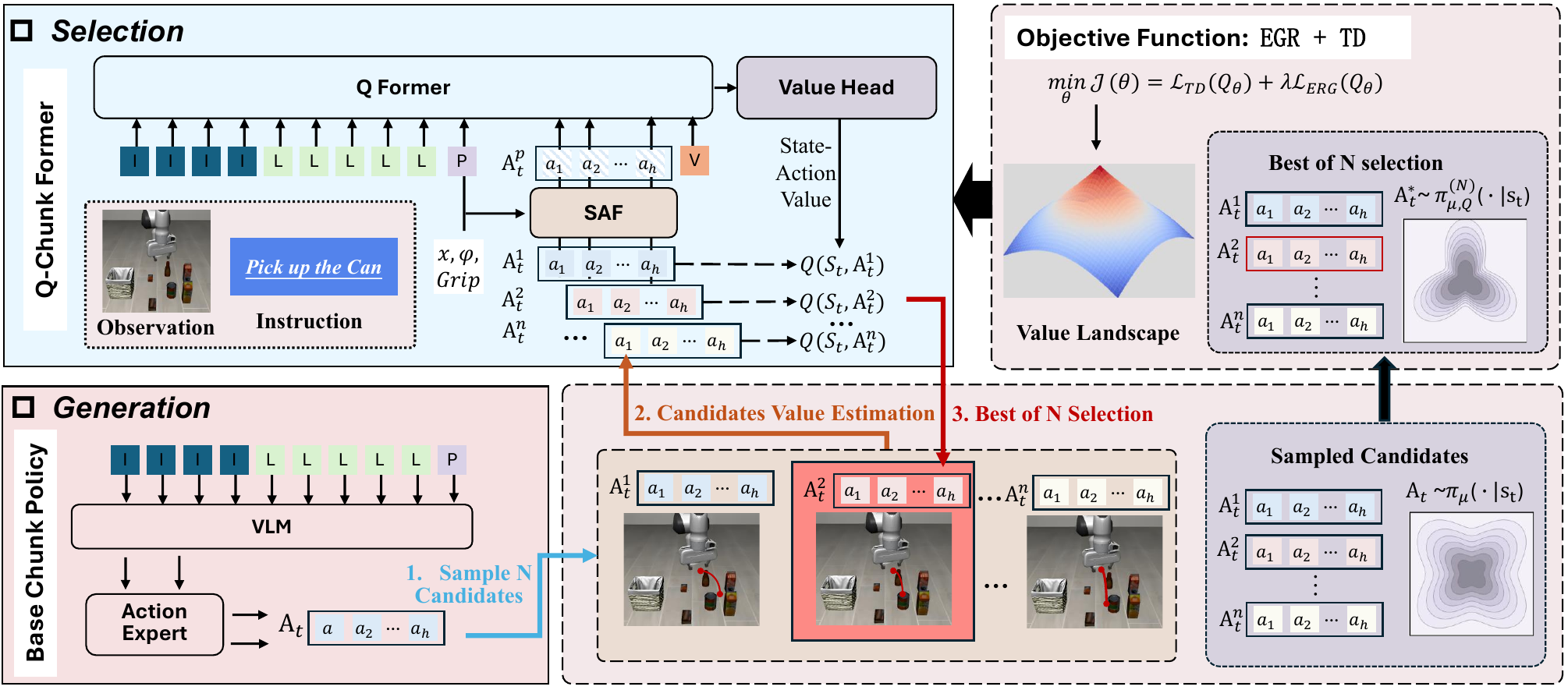}
\caption{The overall framework of \textbf{VGAS}.
\textbf{Generation:} A fine-tuned VLA policy proposes $N$ candidate action chunks from multimodal inputs.
\textbf{Selection:} \textrm{Q-Chunk-Former} learns a scoring function $Q$ via the \texttt{EGR}+\texttt{TD} objective.
Best-of-$N$ selection defines the induced policy $\pi_{\mu,Q}^{(N)}$ by maximizing over a discriminative value landscape shaped by EGR, prioritizing expert-aligned candidates and thereby mitigating geometric drift.}
  \label{fig:framework}
\end{figure*}

\section{Preliminary}
\paragraph{Offline Reinforcement Learning.}
We consider a Markov Decision Process (MDP) defined by $\mathcal{M}=(\mathcal{S},\mathcal{A},\mathcal{P},r,\rho,\gamma)$, where $s\in\mathcal{S}$, $a\in\mathcal{A}$, $\mathcal{P}(s'\mid s,a)$ is the transition kernel, $r(s,a)\in\mathbb{R}$ is the reward function, $\rho_0\in\Delta(\mathcal{S})$ is the initial-state distribution, and $\gamma\in[0,1)$ is the discount factor. The objective is to learn a policy $\pi$ maximizing the expected discounted return $J(\pi)=\mathbb{E}_{\pi}\!\left[\sum_{t\ge 0}\gamma^t r(s_t,a_t)\right]$.

Value-based methods estimate the optimal action-value function $Q^\star$ as the fixed point of the Bellman optimality operator~\cite{fujimoto2019off,levine2020offline}:
\begin{equation}
\small
(\mathcal{T}^\star Q)(s,a)
= r(s,a) + \gamma\,\mathbb{E}_{s'\sim \mathcal{P}(\cdot\mid s,a)}
\left[\max_{a'\in\mathcal{A}} Q(s',a')\right].
\end{equation}

In offline RL, the agent learns from a fixed dataset $\mathcal{D}$ collected by a behavior policy $\pi_\beta$,
without additional environment interaction~\cite{bacchiocchi2024online}.
A key challenge is extrapolation error~\cite{zhang2023adaptive}: the maximization over $a'$ may select actions outside the data support,
leading to overestimation and instability.
Many offline RL methods ~\cite{shin2023guide} mitigate this via conservative regularization~\cite{kumar2020conservative},
which discourages high values on out-of-distribution actions.

\paragraph{Action Chunking in VLAs.}
Modern VLAs condition on multimodal inputs, which we denote by the state
$s_t=(I_t,L_t,p_t)$, comprising visual tokens $I_t$, language instruction $L_t$,
and robot proprioception $p_t$~\cite{zitkovich2023rt}.
With a slight abuse of notation, we treat this policy input as the MDP state.
Instead of per-step control, these models often output a temporally extended \emph{action chunk}
with horizon $h$:
\begin{equation}
\mathrm{A}_t := (a_t,\ldots,a_{t+h-1}) \in \mathcal{A}^h,
\qquad
\mathrm{A}_t \sim \pi_{\mu}(\cdot \mid s_t).
\label{eq:chunk_def}
\end{equation}


\paragraph{Few-shot VLA Adaptation Objective.}
Given a few-shot expert dataset $\mathcal{D}=\{\tau_i\}_{i=1}^{K}$, we first obtain a task-aligned base chunk policy $\pi_\mu(\mathrm{A}\mid s)$, for example via SFT on $\mathcal D$. In this regime, SFT can capture task semantics yet may struggle to resolve fine-grained geometric ambiguities. We therefore treat $\pi_\mu$ as a high-recall proposal distribution and assume non-trivial local support around each demonstrated chunk:
\begin{equation}
\pi_\mu\!\left(\{\mathrm A:\|\mathrm A-\mathrm A_t\|_2\le \varepsilon\}\mid s_t\right)\ge p_0,
 \forall (s_t,\mathrm A_t)\in\mathcal D,
\label{eq:local_support}
\end{equation}
for some $\varepsilon>0$ and $p_0>0$. Intuitively, 
$p_0$ 
 ensures the proposal covers valid near-miss candidates.
Our goal is to improve upon $\pi_\mu$ by learning an offline-adapted chunk policy within a
proposal-constrained class $\Pi_\mu$:
\begin{equation}
\pi^\star := \arg\max_{\pi \in \Pi_\mu} J(\pi),
\label{eq:fewshot_objective}
\end{equation}
where $\Pi_\mu$ denotes policies supported by (or centered at) the proposal $\pi_\mu$. Under chunked execution, the return is
\begin{equation}
    J(\pi):= \mathbb{E}_{\tau \sim \pi}\!\left[ \sum_{k=0}^{\infty} (\gamma^h)^k \, R_h(s_{t_k}, \mathrm{A}_{t_k}) \right],
\label{eq:chunk_return}
\end{equation}
where $t_k = kh$ denotes the start time of the $k$-th chunk, $\mathrm{A}_{t_k} \sim \pi(\cdot \mid s_{t_k})$ is the action chunk, and $R_h(s_{t_k}, \mathrm{A}_{t_k}) = \sum_{j=0}^{h-1} \gamma^{j}\, r\!\left(s_{t_k+j}, a_{t_k+j}\right)$ is the discounted cumulative reward over the chunk.

\section{Methodology}

\paragraph{Framework Overview.} 
\textbf{VGAS} reformulates few-shot VLA adaptation as a generate-then-select  process. As illustrated in Figure~\ref{fig:framework}, our pipeline decouples the policy into two components: a \emph{high-recall generator} and a \emph{high-precision critic}. First, we utilize a supervised fine-tuned (SFT) VLA model as the base policy $\pi_\mu(\mathrm{A}_t \mid s_t)$ to provide a proposal distribution covering plausible action chunks. At inference time, we sample $N$ candidates $\{\mathrm{A}_t^{(i)}\}_{i=1}^N \sim \pi_\mu(\cdot \mid s_t)$ and employ a learned critic $Q_\theta$ to execute Best-of-$N$ selection, $\mathrm{A}_t^\star = \arg\max_i Q_\theta(s_t, \mathrm{A}_t^{(i)})$. This strategy approximates policy improvement within the support of $\pi_\mu$, prioritizing geometric precision without requiring online exploration.

To realize this selection mechanism effectively, VGAS addresses two core challenges: \emph{representation} and \emph{optimization}. First, regarding critic representation (Sec.\ref{subsec:q_arch}), we introduce \textrm{Q-Chunk-Former}, a Transformer-based architecture tailored for VLA inputs. To prevent high-dimensional visual tokens from overwhelming physical cues, we design a State-Action Fusion (SAF) module that explicitly grounds action chunks in proprioceptive states before multimodal integration.
Second, for critic optimization (Sec.\ref{subsubsec:td_chunked_emax}), we propose a hybrid learning objective that combines temporal and spatial supervision. We stabilize offline training using a \emph{Proposal-Constrained Chunked Expected-Max} backup, which enforces temporal consistency across action chunks. We further augment this with \emph{Explicit Geometric Regularization} (\texttt{EGR}), a dense supervision signal that directly shapes the value landscape based on geometric proximity to expert demonstrations, enabling the critic to reliably distinguish near-miss actions from failures even under sparse task rewards.

\subsection{Q-Chunk-Former}
\label{subsec:q_arch}
To optimize the few-shot adaptation objective in Eq.~\eqref{eq:fewshot_objective}, the critic network must accurately estimate the long-horizon value of a temporally extended action chunk $\mathrm{A}_t$ given state $s_t$. This imposes two key requirements,
{(i) Chunk-level evaluation with temporal structure:}
the critic must assign a single long-horizon value to an entire action chunk $\mathrm{A}_t$ for Best-of-$N$ selection
(Eq.~\eqref{eq:bestofn_select}),  while preserving the within-chunk temporal ordering of actions;
{(ii) Multimodal fusion without geometric collapse:} unlike classical critics operating on compact state vectors,
VLA conditioning involves heterogeneous inputs such as vision, language, and proprioception,
where naive compression can discard geometry-critical cues needed for feasibility-aware value estimation.

Motivated by token-level multimodal modeling~\cite{marafioti2025smolvlm}, we introduce a Transformer-based critic.
A naive design treats all modalities as a single concatenated token sequence:
\begin{equation}
Q_\theta(s_t,\mathrm{A}_t)
=\textrm{Transformer}_\theta\!\left([I_t,\,L_t,\,p_t,\,\mathrm{A}_t]\right),
\label{eq:naive_concat}
\end{equation}
where $[\cdot]$ denotes concatenation. However, in practice, the self-attention mechanism tends to be dominated by the abundant visual and linguistic tokens, so the single proprioceptive token $p_t$ receives insufficient attention. This is detrimental because value estimation for manipulation requires joint reasoning over the external world context (from $I_t$ and $L_t$) and the internal robot state (embodiment and configuration encoded by $p_t$). If $p_t$ is under-utilized, the critic becomes less sensitive to geometric feasibility, weakening feasibility-aware value estimation.

To address this, we introduce a lightweight State-Action Fusion (\textrm{SAF}) module that conditions the raw chunk $\mathrm{A}_t$ on $p_t$ prior to mixing with high-dimensional perceptions. The \textrm{SAF} module produces proprioception-grounded action tokens $\mathrm{A}_t^{p}$,
\begin{equation}
\mathrm{A}_t^{p} = \textrm{SAF}(\mathrm{A_t},p_t) = \mathrm{W}_{\text{fuse}} \left( [\mathrm{W}_a \mathrm{A}_t \parallel \mathrm{W}_p p_t] \right),
\label{eq:saf}
\end{equation}
where $\mathrm{W}_a$ and $\mathrm{W}_p$ are learnable projections mapping inputs to a shared latent space and $\mathrm{W}_{\text{fuse}}$ aggregates the concatenated features. This design enforces a high-fidelity interaction between action tokens and the proprioceptive state, ensuring that feasibility cues are embedded directly into the chunk representation. The grounded action tokens $\mathrm{A}_t^{p}$ are then integrated with the frozen perceptual tokens\footnote{Perceptual tokens $I_t$ and $L_t$ are extracted from the pre-trained VLM encoder of the base policy to ensure feature alignment and computational efficiency.} and a learnable $\mathrm{[VALUE]}$ token $\mathrm{v}$ via a Transformer decoder, denoted as the \textrm{Q-Former} (\textrm{QF}). Finally, a \textrm{Value Head} (\textrm{VH}) maps the output embedding of $\mathrm{v}$ to a scalar Q-value:
\begin{equation}
Q_\theta(s_t,\mathrm{A}_t)
=
\textrm{VH}\!\left(
\textrm{QF}\!\left(
I_t,\,L_t,\,p_t,\,\mathrm{A}_t^{p},\,\mathrm{v}
\right)\right).
\label{eq:q_former_arch}
\end{equation}

In summary, \textrm{Q-Chunk-Former} comprises \textrm{SAF}, \textrm{QF}, and \textrm{VH} modules. This architecture mitigates attention imbalance and ensures that value estimation is strictly grounded in geometric reality, providing a reliable signal for selection.

\subsection{Optimization Objective}
\label{subsec:q_learning}

Our goal is to learn a critic that supports \emph{value-guided selection} over a fixed proposal distribution.
We assume access to a task-adapted base chunk policy $\pi_\mu(\mathrm{A}\mid s)$
that generates semantically plausible action chunks.
At inference time, we draw $N$ i.i.d.\ candidates from $\pi_\mu$ and select the best one according to a scoring rule $Q$:
\begin{equation}
\mathrm{A}_Q^\star(s)
~:=~
\arg\max_{i\in[N]} Q\!\left(s,\mathrm{A}^{(i)}\right).
\label{eq:bestofn_select}
\end{equation}

Although the maximization in Eq.~\eqref{eq:bestofn_select} is deterministic conditioned on the sampled set,
proposal sampling induces a stochastic \emph{selection policy}. We denote this induced policy by
$\pi_{\mu,Q}^{(N)}(\mathrm{A}\mid s)$, defined as the distribution of $\mathrm{A}_Q^\star(s)$.

To learn Q from a fixed offline dataset $\mathcal{D}$, we train the critic as a proxy for the long-horizon return of the induced policy $\pi_{\mu,Q}^{(N)}$ in the chunk-induced SMDP. A reliable critic in the few-shot regime must satisfy two
coupled desiderata: (i) \textbf{temporal consistency}, i.e., Bellman-style alignment with the expected \emph{max} return under best-of-$N$ selection over $\pi_\mu$ samples; and (ii) \textbf{spatial consistency}, i.e., preserving fine-grained geometric ranking among near-miss proposals. We instantiate these principles with the following hybrid objective:
\begin{equation}
\min_{\theta}\; \mathcal{J}(\theta)
~=~
\mathcal{L}_{\texttt{TD}}(Q_{\theta})
~+~
\lambda\,\mathcal{L}_{\texttt{EGR}}(Q_{\theta}).
\label{eq:total_objective}
\end{equation}

\paragraph{Temporal Consistency.}
\label{subsubsec:td_chunked_emax}
The primary goal of $\mathcal{L}_{\texttt{TD}}$ is to align critic learning with our inference-time execution,
i.e., to learn the value function induced by Best-of-$N$ selection, $\pi_{\mu,Q}^{(N)}$.
This alignment requires two ingredients.
First, since execution commits to a length-$h$ action chunk, the critic must evaluate the return of an entire chunk.
We therefore adopt the chunk-level \texttt{TD} formulation from Q-Chunking~\cite{li2025reinforcement} and learn
$Q(s_t,\mathrm A_t)$ that conditions on the full action chunk.

Second, we must ensure that the \emph{Bellman backup} matches the same Best-of-$N$ rule used at inference.
To this end, inspired by Expected-Max Q-learning (EMaQ)~\cite{ghasemipour2021emaq}, the key insight of EMaQ is to construct a Bellman backup that replaces the standard expectation under a policy with an expected maximization over a set of sampled proposals. By targeting this maximum, the objective ensures that the critic trained with this operator is guaranteed to converge to the value function of the corresponding Best-of-$N$ selection policy  $\pi_{\mu,Q}^{(N)}$. Thus, we define the proposal-constrained \emph{Chunked Expected--Max} backup operator $\mathcal{T}_\mu^{N}$ as
\begin{equation}
\centering
\begin{aligned}
&(\mathcal{T}_\mu^{N} Q)(s,\mathrm{A}):= R_h(s,\mathrm{A}) 
 + \\& \gamma^h\mathbb{E}_{s'\sim \mathcal{P}_h(\cdot\mid s,\mathrm{A})}
\Big[
\mathbb{E}_{\mathrm{A}'_{1:N}\overset{\mathrm{i.i.d.}}{\sim}\pi_\mu(\cdot\mid s')}
\big[
\max_{i\in[N]} Q(s',\mathrm{A}'_i)
\big]
\Big],
\end{aligned}
\label{eq:chunked_emax_operator}
\end{equation}
where $R_h$ is the discounted cumulative reward over the chunk and $\mathcal{P}_h$ is the $h$-step transition kernel.
The inner maximization corresponds exactly to selecting $\mathrm{A}_Q^\star(s')$ from Eq.~\eqref{eq:bestofn_select},
thereby enforcing strict train--test consistency.

This operator formulation offers rigorous theoretical guarantees for offline adaptation, which we summarize below.

\begin{proposition}[Chunked Expected--Max in tabular SMDPs]
\label{prop:chunked_emax_properties}
In the tabular chunk-induced SMDP, assume bounded rewards and $\gamma^h\in(0,1)$.
Then $\mathcal{T}_\mu^{N}$ in Eq.~\eqref{eq:chunked_emax_operator} is a $\gamma^h$-contraction under $\|\cdot\|_\infty$
and has a unique fixed point $Q_\mu^{N}$.
Let $\pi_\mu^{(N)} := \pi_{\mu,Q_\mu^{N}}^{(N)}$ be the induced Best-of-$N$ policy. Then
$Q_\mu^{N} = Q^{\pi_\mu^{(N)}}$.
Monotonicity in $N$ and the $N\!\to\!\infty$ limit are given in
Prop. 3 and Thm. 2 (App. B).
\end{proposition}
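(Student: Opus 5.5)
We prove the three claims in order: contraction, existence and uniqueness of the fixed point, and identification of that fixed point with the value of the Best-of-$N$ policy.

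\emph{Contraction.} Fix bounded $Q_1,Q_2$ on the finite chunk-level space $\mathcal{S}\times\mathcal{A}^h$. The reward term $R_h(s,\mathrm A)$ cancels in $(\mathcal{T}_\mu^{N}Q_1)(s,\mathrm A)-(\mathcal{T}_\mu^{N}Q_2)(s,\mathrm A)$. We then use the elementary inequality
\[
\bigl|\max_{i\in[N]} x_i-\max_{i\in[N]} y_i\bigr|\le \max_{i\in[N]}|x_i-y_i|,
\]
applied pointwise to each sampled tuple $\mathrm A'_{1:N}$. This gives $|\max_i Q_1(s',\mathrm A'_i)-\max_i Q_2(s',\mathrm A'_i)|\le\|Q_1-Q_2\|_\infty$. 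Both expectations (over $\mathrm A'_{1:N}\sim\pi_\mu^{\otimes N}$ and over $s'\sim\mathcal P_h$) are averages, so they preserve this bound. Hence $\|\mathcal{T}_\mu^{N}Q_1-\mathcal{T}_\mu^{N}Q_2\|_\infty\le\gamma^h\|Q_1-Q_2\|_\infty$.

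\emph{Fixed point.} Bounded rewards give $|R_h|\le R_{\max}(1-\gamma^h)/(1-\gamma)$, so $\mathcal{T}_\mu^{N}$ maps the Banach space of bounded functions into itself. Since $\gamma^h<1$, Banach's fixed-point theorem yields a unique fixed point $Q_\mu^{N}$.

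\emph{Identification with $Q^{\pi_\mu^{(N)}}$.} Let $\pi_\mu^{(N)}$ be the law of $\arg\max_i Q_\mu^{N}(s,\mathrm A^{(i)})$ with $\mathrm A^{(i)}\sim\pi_\mu(\cdot\mid s)$ i.i.d. Ties are broken by a fixed deterministic rule, e.g.\ the smallest index, which makes the induced policy well defined and Markov. The key identity holds by construction: for any $s'$,
\[
\mathbb{E}_{\mathrm A'\sim\pi_\mu^{(N)}(\cdot\mid s')}\bigl[Q_\mu^{N}(s',\mathrm A')\bigr]
=\mathbb{E}_{\mathrm A'_{1:N}\sim\pi_\mu^{\otimes N}}\bigl[\max_i Q_\mu^{N}(s',\mathrm A'_i)\bigr].
\]
The selected chunk attains the maximum, and its distribution is exactly the pushforward defining $\pi_\mu^{(N)}$. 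Consequently $Q_\mu^{N}$ satisfies $Q_\mu^{N}=\mathcal{T}^{\pi_\mu^{(N)}}Q_\mu^{N}$, where
\[
(\mathcal{T}^{\pi}Q)(s,\mathrm A)=R_h(s,\mathrm A)+\gamma^h\,\mathbb{E}_{s'\sim\mathcal P_h}\,\mathbb{E}_{\mathrm A'\sim\pi(\cdot\mid s')}\bigl[Q(s',\mathrm A')\bigr]
\]
is the SMDP policy-evaluation operator. The same sup-norm argument as in the first step, now without a max, shows that $\mathcal{T}^{\pi}$ is a $\gamma^h$-contraction. Its unique fixed point is $Q^{\pi}$, the expected chunk-discounted return of Eq.~\eqref{eq:chunk_return} started from $(s,\mathrm A)$; this follows by unrolling the recursion and using boundedness to pass to the limit. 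Uniqueness then gives $Q_\mu^{N}=Q^{\pi_\mu^{(N)}}$.

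The main subtlety is the third step. The policy $\pi_\mu^{(N)}$ is defined through $Q_\mu^{N}$ itself, so there is an apparent circularity. It is resolved by first fixing $Q_\mu^{N}$ from the contraction argument, then defining the policy from it, and only afterwards checking the policy-evaluation equation. Along the way we must confirm that the chunk-induced process is a genuine SMDP: the pair $(R_h,\mathcal P_h)$ must depend only on $(s,\mathrm A)$, which holds under the tabular Markov assumption. This is what makes $Q^{\pi}$ well defined and lets the unrolled recursion coincide with $J$.
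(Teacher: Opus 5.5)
Your proof is correct. Its first two steps are essentially the paper's argument in App.~\ref{app:convergence_chunked_emax} (Lemma~\ref{lemma:well_defined}, Lemma~\ref{lemma:max_ineq}, Theorem~\ref{thm:contraction}): the reward terms cancel, the pointwise bound $|\max_i x_i-\max_i y_i|\le\max_i|x_i-y_i|$ survives both expectations, and Banach's theorem gives the unique fixed point.

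Where you differ is the identification $Q_\mu^{N}=Q^{\pi_\mu^{(N)}}$. The appendix never proves it: it proves only contraction, then monotonicity in $N$ and the $N\to\infty$ sketch, leaning on EMaQ. You supply the standard argument. The identity $\mathbb{E}_{\mathrm A'\sim\pi_\mu^{(N)}(\cdot\mid s')}[Q_\mu^{N}(s',\mathrm A')]=\mathbb{E}_{\mathrm A'_{1:N}}[\max_{i} Q_\mu^{N}(s',\mathrm A'_i)]$ holds precisely because the selection policy is built from the same $Q_\mu^{N}$. It makes $Q_\mu^{N}$ a fixed point of the $\gamma^h$-contractive evaluation operator $\mathcal{T}^{\pi_\mu^{(N)}}$, whose unique fixed point is $Q^{\pi_\mu^{(N)}}$.

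Fixing $Q_\mu^{N}$ first and only then defining the policy correctly removes the apparent circularity. Ties are harmless, since any maximizer attains the max; the tie rule only serves to make the policy well defined.

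Your route is what actually delivers the property the main text relies on for train--test consistency: that the fixed point is the return of the Best-of-$N$ policy. The paper's contraction argument is written for bounded measurable $Q$ rather than the tabular case. Your first two steps extend to that setting the same way. Only your third step would additionally need a measurable argmax selection outside the tabular case.
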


Proposition~\ref{prop:chunked_emax_properties} shows that the proposed backup is well-defined in the tabular induced SMDP and that its unique fixed point corresponds exactly to the value function of the induced Best-of-$N$ selection policy. A detailed proof of these properties in the tabular SMDP setting is provided in Appendix \ref{app:convergence_chunked_emax}.

Based on this operator, we instantiate the final temporal-difference loss with a standard target network $Q_{\bar{\theta}}$ for stability:
\begin{equation}
\begin{aligned}
&\mathcal{L}_{\texttt{TD}}(\theta)
=
\mathbb{E}_{(s_t,\mathrm{A}_t,s_{t+h})\sim\mathcal{D}}
\Big[
\big(
Q_{\theta}(s_t,\mathrm{A}_t) - y_t
\big)^2
\Big], \\
&\text{where }
y_t=R_h(s_t,\mathrm{A}_t)
+\gamma^h \max_{i\in[N]} Q_{\bar{\theta}}(s_{t+h}, \mathrm A_{t+h}^{\prime i}).
\end{aligned}
\label{eq:td_loss_final}
\end{equation}

Here, the proposals $\{\mathrm A_{t+h}^{\prime i}\}_{i=1}^{N}$ are sampled from the frozen proposal distribution $\pi_\mu(\cdot\mid s_{t+h})$ at the next decision state. This loss provides a stable, proposal-constrained temporal anchor for our critic, paving the way for the spatial regularization described next.

\paragraph{Spatial Consistency.} 

\label{subsubsec:egr}

While the proposal-constrained \texttt{TD} loss provides a stable temporal anchor, offline critic learning remains vulnerable to extrapolation error, where OOD (off-demo) candidates deviating from the training data distribution receive spuriously high values. Standard conservative methods, such as CQL~\cite{kumar2020conservative}, mitigate this by indiscriminately suppressing values for all low-support actions. However, in the few-shot regime, this uniform penalty is overly aggressive, because it compresses the value dynamic range among proposal-supported candidates and reduces the fine-grained ranking resolution needed to distinguish plausible near-miss candidates from catastrophic failures. This value collapse directly undermines inference-time Best-of-$N$ selection.

To address this, we introduce \emph{Explicit Geometric Regularization (\texttt{EGR})}. Instead of uniformly suppressing off-demo actions, \texttt{EGR} serves as a structural regularizer. During training, we regularize the critic with off-demo candidates sampled from \(\rho(\cdot \mid s_t)\), a proposal-centered mixture detailed in Appendix ~\ref{sec:exp_setup}. At inference time, Best-of-$N$ selects only among proposal samples from $\pi_\mu(\cdot\mid s_t)$. Accordingly, \texttt{EGR} targets two desiderata: (\textbf{i}) \emph{TD-Anchored Calibration:} anchor the value scale to the \texttt{TD} target $y_t$ to \emph{mitigate} overestimation on off-demo candidates; (\textbf{ii}) \emph{Geometric Discriminability:} within the inference-time proposal set, preserve a graded preference for geometric proximity, providing a smooth signal to separate recoverable near-misses from divergence.

We formalize \texttt{EGR} as the following weighted combination of \emph{Anchoring} and \emph{Ranking} losses.
\begin{equation}
\mathcal{L}_{\mathtt{EGR}}(\theta)
= \mathcal{L}_{\mathtt{anchor}}(\theta)
+\eta\,\mathcal{L}_{\mathtt{rank}}(\theta),
\label{eq:egr_total}
\end{equation}
where $\eta\ge 0$ balances the \emph{absolute scale} (Anchoring) and the \emph{local ordering} (Ranking).

\textit{a. Geometric anchoring ($\mathcal{L}_{\mathrm{anchor}}$).}
For any off-demo action chunk
$\hat{\mathrm A}_t \sim \rho(\cdot\mid s_t)$, we define
\begin{equation}
\mathcal{Y}(s_t,\hat{\mathrm A}_t)
\coloneqq
\operatorname{sg}(y_t)
-\beta||\hat{\mathrm A}_t - \mathrm A_t \rVert_{\mathcal W}^2,
\label{eq:ref_surface}
\end{equation}
where $\operatorname{sg}(\cdot)$ stops gradients and $y_t$ is the \texttt{TD} target from Eq.~\eqref{eq:td_loss_final}.
Crucially, we do not posit Euclidean distance as a global ground-truth metric; rather, we employ this surface as a
\emph{structural inductive bias} to shape the value landscape in regions where task supervision is absent. In our setting, we employ a weighted metric $\|\hat{\mathrm A}_t - \mathrm A_t \|_{\mathcal W}^2$ to prioritize critical kinematic dimensions (e.g., end-effector position) within the normalized control space. This serves as a robust local proxy for geometric proximity: under smooth dynamics, small weighted action deviations tend to induce small short-horizon trajectory deviations. (See Appendix \ref{weight_detail} for details).

To satisfy \texttt{TD}-anchored calibration, we align the critic’s estimates on candidate chunks with the reference surface:
\begin{equation}
\mathcal{L}_{\mathrm{anchor}}(\theta; s_t,\mathrm A_t)
=
\mathbb{E}_{\hat{\mathrm A}_t\sim\rho(\cdot\mid s_t)}
\Big[
\ell\big(Q_\theta(s_t,\hat{\mathrm A}_t),\,\mathcal{Y}(s_t,\hat{\mathrm A}_t)\big)
\Big],
\label{eq:egr_anchor}
\end{equation}
where $\ell(\cdot,\cdot)$ is the squared error.

\textit{b. Geometric ranking ($\mathcal{L}_{\mathrm{rank}}$).}
While anchoring constrains the absolute scale, it does not guarantee robust \emph{local} discrimination. To enforce
geometric discriminability, we introduce a pairwise ranking loss. For any pair
$(\hat{\mathrm{A}}_t^{i}, \hat{\mathrm{A}}_t^{j}) \sim \rho(\cdot\mid s_t)$, define squared distances to the expert:
\begin{equation}
d_i \coloneqq \left\lVert \hat{\mathrm{A}}_t^{i}-\mathrm{A}_t \right\rVert_{\mathcal W}^2,
\qquad
d_j \coloneqq \left\lVert \hat{\mathrm{A}}_t^{j}-\mathrm{A}_t \right\rVert_{\mathcal W}^2.
\label{eq:egr_pair_dist}
\end{equation}
By Eq.~\eqref{eq:ref_surface}, the reference differential value satisfies
$\mathcal{Y}(s_t,\hat{\mathrm{A}}_t^{i})-\mathcal{Y}(s_t,\hat{\mathrm{A}}_t^{j})=\beta(d_j-d_i)$.
We thus encourage the critic to preserve the same relative differences among candidate chunks:
\begin{equation}
\begin{aligned}
    \mathcal{L}_{\mathrm{rank}}(\theta; s_t,\mathrm A_t) =
\mathbb{E}_{\hat{\mathrm{A}}_t^{i},\,\hat{\mathrm{A}}_t^{j}\sim \rho(\cdot\mid s_t)}
\Big[
\ell\!\Big(
Q_{\theta}(s_t,\hat{\mathrm{A}}_t^{i}) \\ 
-Q_{\theta}(s_t,\hat{\mathrm{A}}_t^{j}), \beta(d_j-d_i)\Big)
\Big].
\end{aligned}
\label{eq:egr_rank}
\end{equation}

\paragraph{The Closed Loop of Spatio-Temporal Consistency.}
\label{subsubsec:synergy}
The \texttt{TD} and \texttt{EGR} form a mutually reinforcing loop for safe and effective value learning. The Expected-Max TD objective provides a foundational safety layer by inherently operating within the support of the proposal distribution $\pi_\mu$. However, this implicit constraint alone can still be vulnerable to selecting outlier candidates that are accidentally overestimated. \texttt{EGR} reinforces this safety by explicitly shaping the OOD value landscape into a geometric funnel. This structure actively biases the Best-of-$N$ maximization toward candidates that remain close to expert behavior, providing a tighter and more reliable bound on the \texttt{TD} target. This enhanced safety guarantee is formalized by the following proposition:
\begin{proposition}[Best-of-$N$ bound under an \texttt{EGR} anchoring envelope]
\label{prop:egr_bestofn_bound}
Fix a demonstration pair $(s_t,\mathrm A_t)\sim\mathcal D$ and a candidate distribution
$\pi_\mu(\cdot\mid s_t)$. Define the \texttt{EGR} reference surface
$\mathcal Y(s_t,\mathrm A'_t)=\operatorname{sg}(y_t)-\beta\|\mathrm A'_t-\mathrm A_t\|_{\mathcal W}^2$
(Eq.~\eqref{eq:ref_surface}) and the \emph{anchoring residual}
$\delta_\theta(s_t,\mathrm A'_t)\coloneqq Q_\theta(s_t,\mathrm A'_t)-\mathcal Y(s_t,\mathrm A'_t)$.
Assume it is uniformly upper-bounded on the candidate set:
\[
\sup_{\mathrm A'_t\in \mathrm{supp}(\pi_{\mu}(\cdot\mid s_t))}\delta_\theta(s_t,\mathrm A'_t)\le \varepsilon.
\]
Draw $N$ candidates $\{\mathrm A_t^{\prime i}\}_{i=1}^N\sim\rho(\cdot\mid s_t)$. Then
\begin{equation}
\max_{i\in[N]} Q_\theta(s_t,\mathrm A_t^{\prime i})
\le
\operatorname{sg}(y_t)+\varepsilon.
\label{eq:bestofn_bound_main}
\end{equation}
See Appendix \ref{app:egr_bestofn_bound} for the full statement and proof.
\end{proposition}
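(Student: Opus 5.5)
The bound follows pointwise from the definition of the anchoring residual, so the plan is a direct two-line argument followed by a check of the support issue. For any candidate $\mathrm A'_t$ in the support of $\pi_\mu(\cdot\mid s_t)$, rearranging the definition of $\delta_\theta$ gives
\[
Q_\theta(s_t,\mathrm A'_t)=\mathcal Y(s_t,\mathrm A'_t)+\delta_\theta(s_t,\mathrm A'_t)
=\operatorname{sg}(y_t)-\beta\|\mathrm A'_t-\mathrm A_t\|_{\mathcal W}^2+\delta_\theta(s_t,\mathrm A'_t).
\]
The weighted squared norm is nonnegative because $\mathcal W$ is a nonnegative diagonal weighting, as in Appendix~\ref{weight_detail}. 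If we also take $\beta\ge 0$, the middle term is at most zero. Combining this with the uniform assumption $\delta_\theta\le\varepsilon$ on the support yields $Q_\theta(s_t,\mathrm A'_t)\le \operatorname{sg}(y_t)+\varepsilon$ for every supported candidate.

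Next I pass to the maximum over the $N$ drawn candidates. Each $\mathrm A_t^{\prime i}$ lies almost surely in the set on which the pointwise bound holds, so the bound holds for every $i\in[N]$ simultaneously with probability one. The maximum of finitely many quantities, each bounded by the same constant, is bounded by that constant, which gives Eq.~\eqref{eq:bestofn_bound_main}. The bound is deterministic given the sample, so it also holds in expectation. In particular it transfers to the Expected--Max target of Eq.~\eqref{eq:chunked_emax_operator} if one wants to state it at the operator level.

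The only delicate step is reconciling the sampling distributions. The statement assumes the residual bound on $\mathrm{supp}(\pi_\mu(\cdot\mid s_t))$, but it draws the candidates from $\rho(\cdot\mid s_t)$. I would resolve this in one of two ways.
\begin{itemize}
\item Read the draw as coming from $\pi_\mu$, which is what inference-time Best-of-$N$ uses, per Eq.~\eqref{eq:bestofn_select}.
\item Otherwise, add the hypothesis $\mathrm{supp}(\rho(\cdot\mid s_t))\subseteq\mathrm{supp}(\pi_\mu(\cdot\mid s_t))$. Alternatively, assume the residual bound over $\mathrm{supp}(\rho)$ directly. This is natural because $\rho$ is described as a proposal-centered mixture in Appendix~\ref{sec:exp_setup}.
\end{itemize}
Either way, the pointwise argument above goes through unchanged.

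As a remark, I would note that keeping the term $-\beta\min_i\|\mathrm A_t^{\prime i}-\mathrm A_t\|_{\mathcal W}^2$ gives a sharper bound. In that form the selected value is penalized by the geometric distance of the nearest candidate to the expert chunk, which is the "funnel" interpretation of \texttt{EGR}.
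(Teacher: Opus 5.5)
Your proof is correct and is essentially the paper's: write $Q_\theta=\mathcal Y+\delta_\theta$ to get the pointwise envelope $Q_\theta(s_t,\mathrm A'_t)\le\operatorname{sg}(y_t)-\beta\|\mathrm A'_t-\mathrm A_t\|_{\mathcal W}^2+\varepsilon$, then take the maximum over the $N$ candidates, keeping the sharper $-\beta\min_i\|\mathrm A_t^{\prime i}-\mathrm A_t\|_{\mathcal W}^2$ term before dropping it by nonnegativity. On the support mismatch, the appendix takes your first reading (candidates drawn from $\pi_\mu$), but it assumes the residual bound on $\mathrm{supp}(\rho(\cdot\mid s_t))$ and transfers it via $\mathrm{supp}(\pi_\mu)\subseteq\mathrm{supp}(\rho)$, which holds automatically because $\rho$ is a mixture containing $\pi_\mu$; this is the reverse of the extra inclusion you proposed, and the construction of $\rho$ (e.g.\ its GT-plus-noise component) does not guarantee your direction.
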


\emph{TD Anchors and Calibrates EGR.}
Conversely, the \texttt{TD} objective provides an essential grounding signal for \texttt{EGR}. The \texttt{TD} target $y_t$ sets the absolute scale for the \texttt{EGR} reference surface, preventing the geometric shaping from degenerating into an uncalibrated ranking function. This synergy allows the \texttt{TD}-\texttt{EGR} loop to progressively correct value estimates: \texttt{TD} provides the return-aware scale, while \texttt{EGR} provides the fine-grained geometric structure, jointly enabling robust ranking.

\begin{table*}[t]
\centering
\setlength{\tabcolsep}{4pt}
\renewcommand{\arraystretch}{1.15}

\begin{tabular}{@{} l l
S[table-format=2.1] S[table-format=1.0]
S[table-format=2.1] S[table-format=1.0]
S[table-format=2.1] S[table-format=1.0]
S[table-format=2.1] S[table-format=1.0]
S[table-format=2.1] S[table-format=2.1] @{}}
\toprule
& & \multicolumn{2}{c}{LIBERO-Spatial}
  & \multicolumn{2}{c}{LIBERO-Object}
  & \multicolumn{2}{c}{LIBERO-Goal}
  & \multicolumn{2}{c}{LIBERO-Long}
  & \multicolumn{2}{c}{Average} \\
\cmidrule(lr){3-4}\cmidrule(lr){5-6}\cmidrule(lr){7-8}\cmidrule(lr){9-10}\cmidrule(lr){11-12}
Type & Method
& {SR ($\uparrow$)} & {Rank ($\downarrow$)}
& {SR ($\uparrow$)} & {Rank ($\downarrow$)}
& {SR ($\uparrow$)} & {Rank ($\downarrow$)}
& {SR ($\uparrow$)} & {Rank ($\downarrow$)}
& {SR ($\uparrow$)} & {Rank ($\downarrow$)} \\
\midrule
BC-Only & SmolVLA
& 46.0 & {4}    
& 45.8 & {4}    
& 52.0 & {3}    
& 15.5 & {4}    
& 39.8 & {4}    
\\
\midrule
\multirow{4}{*}{BC + RL}
& QC-M
& {46.0} & {4} & {42.1} & {5} & {49.2} & {5} & {14.4} & {5} & {37.9} & {5} \\
& QC-M+CQL
& {47.8} & {2} & {50.3} & {2} & {54.2} & {2} & {16.5} & {3} & {42.2} & {2} \\
& QC-T+CQL
& {47.7} & {3} & {50.1} & {3} & {51.9} & {4} & {{17.5}} & {2} & {41.8} & {3} \\
\cmidrule(l){2-12}
\rowcolor{gray!20}
& \textbf{VGAS (Ours)} & \textbf{56.2} & \textbf{1} & \textbf{59.0} & \textbf{1} & \textbf{60.8} & \textbf{1} & \textbf{20.0} & \textbf{1} & \textbf{49.0} & \textbf{1} \\
\bottomrule
\end{tabular}
\caption{Results on LIBERO (5-shot per task). SR denotes success rate ($\%$).}
\label{tab:libero_cross}
\end{table*}

\section{Experiment}
\label{sec:experiments}

In our experiments, we aim to answer the following research questions: \emph{RQ1: Does \textbf{VGAS} benefit from a Transformer-based chunk critic (Q-Chunk-Former) for modeling fine-grained geometric dependencies under multimodal inputs?} \emph{RQ2: Does Explicit Geometric Regularization (\texttt{EGR}) improve value calibration and ranking over state--action-chunk candidates for Best-of-$N$ selection?}

\subsection{Experiment Settings}
\label{sec:exp_settings}

\paragraph{Benchmark and Architecture.}
We evaluate \textbf{VGAS} on the widely used simulation benchmark, LIBERO~\cite{liu2023libero}. LIBERO is a lifelong learning benchmark focused on language-guided manipulation tasks across diverse object types, task specifications, and environments scenarios. Specifically, it includes 4 suites: Goal, Spatial, Object, and Long. Each suite is designed to evaluate a specific aspect of object manipulation and containing 10 distinct tasks.
We use SmolVLA-0.5B~\cite{shukor2025smolvla} as the base chunk policy.
We initialize \textrm{Q-Chunk-Former} with the first two decoder layers of the pre-trained SmolVLM2~\cite{marafioti2025smolvlm}, keeping the critic's token space aligned with the policy.
Comprehensive architectural details are provided in Appendix \ref{sec:impl_details}.

\paragraph{Baselines.}
We compare \textbf{VGAS} against baselines covering VLA fine-tuning and offline value-based improvement. \textbf{BC-only} fine-tunes SmolVLA with behavior cloning. \textbf{CQL}~\cite{kumar2020conservative} serves as a representative conservative offline RL objective and is widely used in VLA settings~\cite{song2025hume,chebotar2023q,huang2025co,nakamoto2025steering,chen2025conrft}. \textbf{QC-M} follows Q-Chunking~\cite{li2025reinforcement}, training an MLP critic with standard TD learning over action chunks. \textbf{QC-M+CQL} augments \textbf{QC-M} with the CQL regularizer. Finally, \textbf{QC-T+CQL} retains the same objective but replaces the MLP with our \textrm{Q-Chunk-Former}, enabling a controlled comparison of critic architectures under identical conservative constraints. Implementation details are provided in Appendix \ref{sec:baseline}.

\subsection{Main Results}
\label{sec:main result}

Table \ref{tab:libero_cross} reports LIBERO success rates. The unregularized Q-Chunking baseline (\textbf{QC-M}) underperforms BC ($37.9\%$ vs.\ $39.8\%$), reflecting a maximization issue in offline RL. With sparse demonstrations, the critic can overestimate slightly off-demonstration near-miss chunks due to function approximation error, and Best-of-$N$ selection then preferentially picks these overvalued outliers, inducing compounding drift as rollouts enter poorly covered state--action regions~\cite{kumar2020conservative,mark2024policy}.

Adding CQL alleviates this failure, improving QC-M from $37.9\%$ to $42.2\%$ (\textbf{QC-M+CQL}), but the gain over BC remains modest. Replacing the MLP with our Transformer backbone yields a similar result ($41.8\%$ for \textbf{QC-T+CQL}), suggesting that indiscriminate conservative suppression compresses value gaps among proposal-supported candidates, leaving Best-of-$N$ with insufficient ranking resolution.

Finally, \textbf{VGAS} achieves $49.0\%$, outperforming \textbf{QC-T+CQL} by a clear margin. Since both methods share the same \textrm{Q-Chunk-Former} backbone, this improvement is primarily attributed to Explicit Geometric Regularization (\texttt{EGR}). See Sec.~\ref{sec:static_analysis} for further comparative analysis.

\FloatBarrier
\begin{figure*}[!htbp]
  \centering
  \includegraphics[width=.95\textwidth]{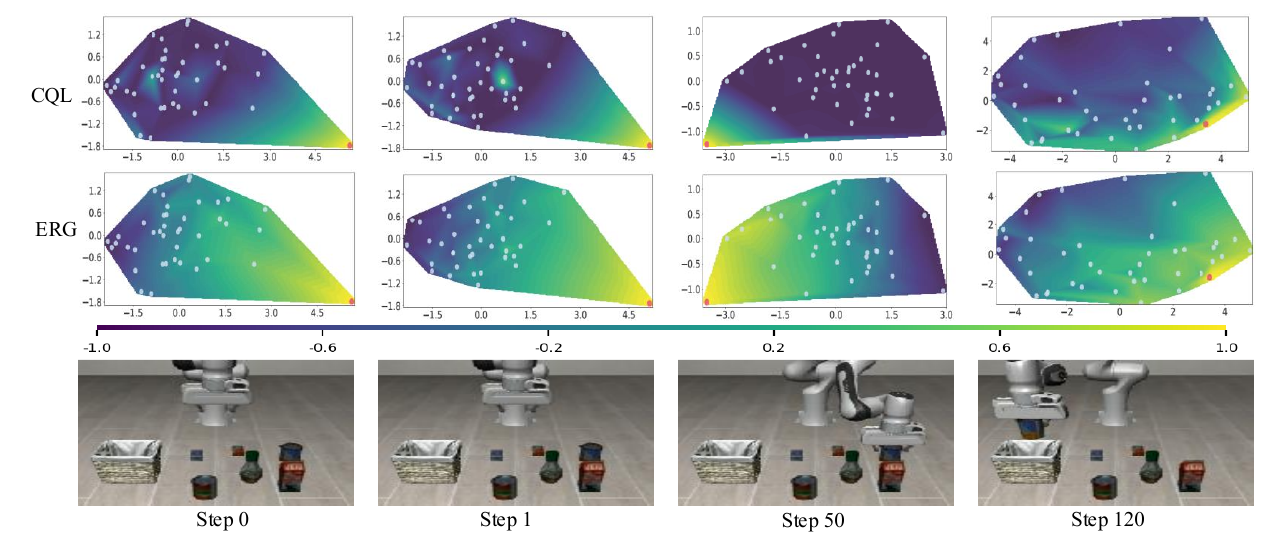}
  \caption{Visualization of the Proposal-Candidate Value Landscape: CQL vs. EGR (Ours)}
  \label{fig:value_map_comparison}
\end{figure*}

\subsection{Ablation Studies}
\label{sec:ablation studys}
Table~\ref{table:ablation} isolates the contribution of each component. The dominant gain arises from \emph{Explicit Geometric Regularization} ($\mathcal{L}_{\texttt{EGR}}$). Removing $\mathcal{L}_{\texttt{EGR}}$ leads to a sharp drop in success rate from $49.0\%$ to $36.4\%$, effectively collapsing performance back to the unregularized QC-M baseline ($37.9\%$). This regression suggests that, without geometric shaping, the critic is susceptible to a canonical offline RL failure mode: it assigns spuriously high values to off-demo candidates, and the subsequent Best-of-$N$ maximization amplifies these overestimations. These results indicate that \texttt{EGR} plays a dual role: beyond suppressing erroneous high values on off-demo actions, it explicitly structures the local value landscape around demonstrations, preserving the ranking resolution required for reliable Best-of-$N$ selection. In addition, ablating the temporal consistency term ($\mathcal{L}_{\texttt{TD}}$) reduces performance to $45.5\%$, showing that while \texttt{EGR} provides strong spatial guidance, a TD-based anchor remains necessary to stabilize long-horizon value estimates.

\begin{table}[h]
\centering
\begin{tabular}{@{}l|ccccc@{}}
\toprule
Variants & Spatial & Object & Goal & LONG & Avg \\
\midrule
w/o \textrm{SAF} & 51.8& 54.2& 56.8& 17.4& 45.1 \\
w/o $\mathrm{QCF}$ & {53.4} & 55& 55.9& {16.8} & 45.3 \\
\midrule
w/o $\mathcal{L}_{\texttt{EGR}}$ & 42.6 & 42.3 & 48.2& 12.5 &36.4\\
w/o $\mathcal{L}_{\texttt{TD}}$ & 55.2 & 54.5&56.0& 16.4 &45.5\\
\midrule
\textbf{VGAS} & 56.2 & {59.0} & 60.8 & 20.0 & 49 \\
\bottomrule
\end{tabular}
\caption{Ablation study on LIBERO (success rates ($\%$)).}
\label{table:ablation}
\end{table}

Architectural choices prove equally critical.
We denote our Transformer-based critic as $\mathrm{QCF}$ (\textrm{Q-Chunk-Former}).
Replacing $\mathrm{QCF}$ with a standard MLP backbone (w/o $\mathrm{QCF}$) yields $45.3\%$,
validating the necessity of the Transformer's attention mechanism for modeling complex multimodal dependencies.
Notably, ablating the State-Action Fusion module (w/o \textrm{SAF}) further lowers performance to $45.1\%$.
This suggests that without explicit grounding, the critic struggles to resolve fine-grained geometric ambiguities
amidst high-dimensional visual features.

\begin{figure}[t]
  \centering
  \begin{subfigure}[t]{0.5\linewidth}
    \centering
    \includegraphics[width=\linewidth]{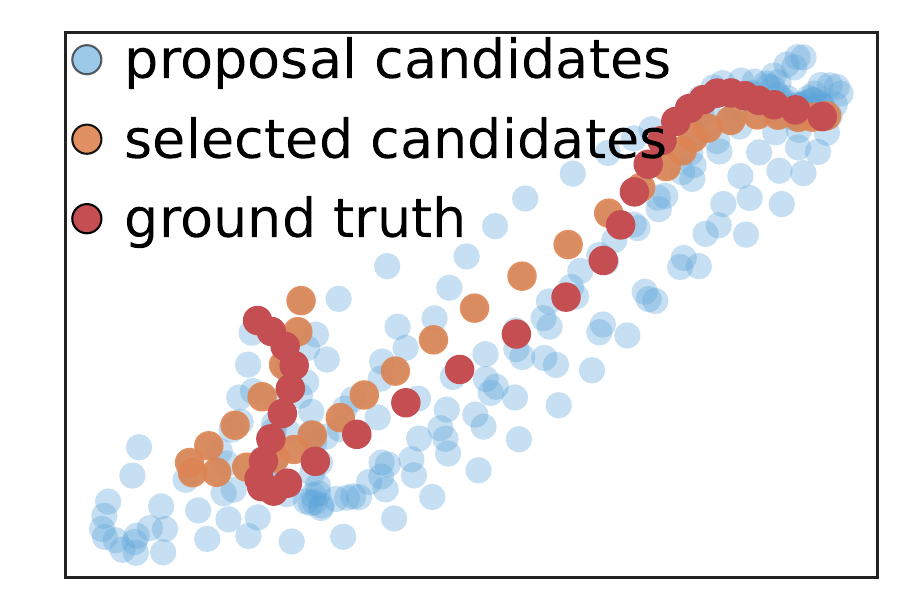}
    \caption{Projection on X-Z Plane}
    \label{fig:decision_profile_a}
  \end{subfigure}\hfill
  \begin{subfigure}[t]{0.5\linewidth}
    \centering
    \includegraphics[width=\linewidth]{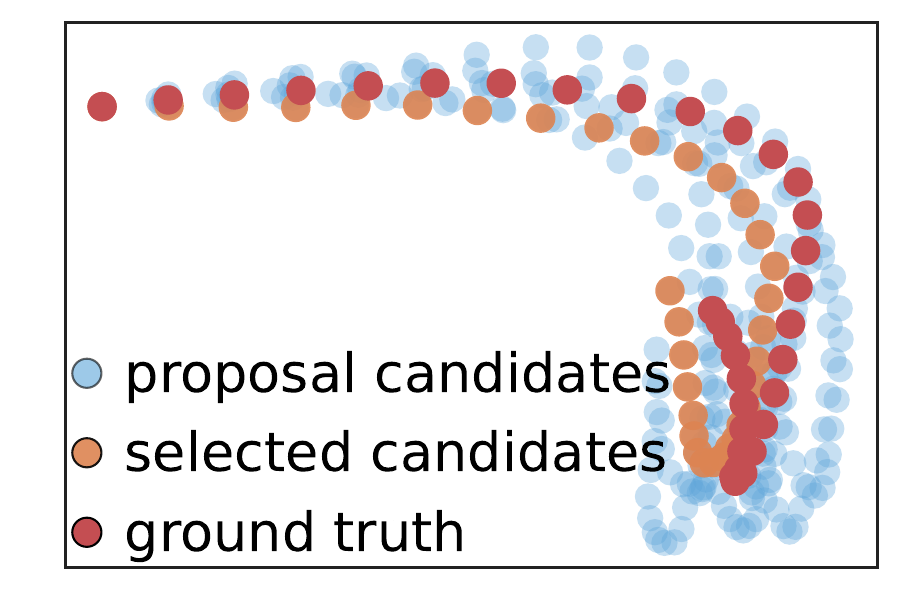}
    \caption{Projection on Y-Z Plane}
    \label{fig:decision_profile_b}
  \end{subfigure}
  \caption{Multi-view Spatial Rollouts of Action Chunks and \textbf{VGAS} Selection. Trajectories are reconstructed via temporal integration in orthogonal views. \textbf{VGAS} identifies the trajectory aligning with the expert across 3D space.}
  \label{fig:decision_profile}
\end{figure}

\subsection{Visualization Analysis}
\label{sec:static_analysis}

Figure~\ref{fig:value_map_comparison} visualizes the landscape of estimated action-chunk values projected onto a 2D plane, with $Q$-values normalized to $[-1, 1]$ for comparison. As shown in the top row, standard conservative regularization (CQL) results in a collapsed value landscape. It indiscriminately suppresses the Q-values of all proposal candidates (white dots) to a uniformly low level. Consequently, the critic loses the resolution to differentiate recoverable near-miss proposals from failures, rendering the selection process ineffective.
In contrast, the bottom row demonstrates that VGAS successfully restores fine-grained ranking resolution. Instead of uniform suppression, the value signal exhibits a graded geometric preference that decays smoothly as candidates deviate from the expert actions (red dot). This structure ensures that the critic can meaningfully rank candidates based on their physical proximity to the optimal solution. The detailed analysis is in Appendix \ref{app:viz_analysis}.

To assess execution precision, we visualize the spatial rollouts induced by action chunks on held-out task instances. Although these instances remain semantically aligned with the training demonstrations, they contain subtle yet consequential spatial variations. The SFT baseline (blue) exhibits pronounced dispersion, producing a “cloud” of candidates that frequently drifts away from the target. This indicates poor geometric generalization under few-shot supervision: the policy tends to memorize demonstration-specific trajectories rather than adapt its execution to instance-level spatial configurations. In contrast, \textbf{VGAS} acts as a geometric stabilizer. By enforcing structural consistency rather than exact path memorization, the learned critic suppresses this variance and selects the candidate (orange) that best matches the current spatial arrangement. Consequently, \textbf{VGAS} corrects execution drift induced by rigid imitation in the base policy. Additional experiments on other VLA baselines, inference-budget sensitivity, demonstration-budget scaling, and hyperparameter sensitivity are provided in Appendix \ref{app:additional_exp}.

\section{Conclusion and Limitations}
We propose \textbf{VGAS}, which reformulates few-shot VLA adaptation as value-guided selection by decoupling high-recall proposal generation from a high-precision geometric critic. Two components jointly address the near-miss failure mode: \textrm{Q-Chunk-Former} grounds multimodal observations into chunk-level value estimates, and \texttt{EGR} prevents value-landscape collapse to preserve ranking resolution. On LIBERO, \textbf{VGAS} consistently outperforms SFT and offline RL baselines. Despite these improvements, two practical limitations remain. First, inference-time Best-of-$N$ selection incurs computational latency, limiting applicability in high-frequency control. Second, extending \textbf{VGAS} to real-world platforms remains an important direction for future work.

\section*{Acknowledgments}
The work was supported by the Australian Research Council (ARC) under Laureate project FL190100149.
\bibliographystyle{named}
\bibliography{ijcai26}

\newpage
\appendix
\onecolumn

\section{Related Work}
\label{app:relatedWorks}
\subsection{Vision-Language-Action Models.}
The intersection of computer vision and robotic control has been advanced by Vision-Language-Action (VLA) models, which endow high-capacity Vision-Language Models (VLMs) with actuation capabilities to map multimodal inputs (visual observations and natural language instructions) to executable robot actions~\cite{zitkovich2023rt,yang2026walking}. Representative architectures such as RT-2~\cite{zitkovich2023rt}, Octo~\cite{team2024octo}, OpenVLA~\cite{kim2024openvla}, and related works~\cite{brohan2022rt,black2024pi0,intelligence2025pi05} demonstrate strong generalization across diverse tasks. A common paradigm is large-scale pre-training followed by supervised fine-tuning (SFT) on robotic demonstrations, which grounds semantic knowledge into physical control.

Despite this progress, pure imitation remains limited for efficient adaptation. SFT typically requires substantial expert coverage and can degrade sharply in few-shot regimes. Moreover, many VLAs generate actions via open-loop chunking~\cite{zhao2023learning}, without an intrinsic mechanism to evaluate or rank candidate chunks by physical fidelity. This motivates adapting pretrained VLAs beyond static imitation by introducing explicit value-based evaluation for action-chunk selection.

\subsection{Value-based Offline Reinforcement Learning.}
Reinforcement learning (RL)~\cite{levine2020offline,duan2025bandwidth,yang2026walking} seeks policies that maximize long-horizon return~\cite{kong2024efficient}. In the offline setting, value functions can be learned from static datasets without additional interaction, making critic-centric methods attractive when online rollouts~\cite{bacchiocchi2024online} are costly or unsafe. To prevent extrapolation errors, Conservative offline RL methods \cite{kumar2020conservative} mitigate extrapolation error by suppressing values of out-of-distribution (OOD) actions~\cite{zhang2023adaptive,yang2025adapting,peng2025distributional}, while implicit approaches like IQL~\cite{kostrikov2021offline} tend to avoid OOD actions via behavior-regularized policy extraction. 

\textit{Sampling-based Maximization (Best-of-$N$).}
In continuous action spaces, directly computing $\max_a Q(s,a)$ is often intractable. A widely used surrogate is \emph{proposal-constrained maximization}: sample $N$ candidate actions from a behavior-aligned proposal (or a learned generative model) and select the argmax under a critic. BCQ~\cite{fujimoto2019off} is a canonical example, using a state-conditioned generative model to produce in-distribution candidates and choosing the highest-valued action via a learned Q-function. EMaQ~\cite{ghasemipour2021emaq} formalizes this principle via an \emph{Expected-Max} backup operator that interpolates between evaluation and maximization through the number of samples $N$, providing an operator-level view of Best-of-$N$ style policy improvement. More recently, generative planners such as Diffuser~\cite{janner2022planning} and value-guided diffusion methods~\cite{luodreamfuser} similarly combine sampling with value-based guidance or reranking over action sequences or trajectories, sharing the core idea of improving decision quality through learned value signals.

\subsection{RL for VLA Models.}
Integrating RL into VLA models aims to combine semantic reasoning with return-driven optimization. A core challenge is that modern VLAs often execute \emph{action chunks} (temporally extended sequences) rather than atomic actions, which shifts both credit assignment and value estimation from the single-step regime to the chunk level. Moreover, unlike traditional RL that operates on compact state vectors, VLAs condition on high-dimensional multimodal inputs (vision, language, and proprioception), making stable value learning sensitive to cross-modal fusion and representation balance.

\textit{Action Chunk Value learning.}
Recent works extend value learning to action chunking by treating a short-horizon action sequence as a single decision unit~\cite{li2025reinforcement}, showing that chunk-level Q-functions can improve temporal consistency. This line of work also introduces unbiased $n$-step backup targets to stabilize and accelerate TD learning under temporally extended actions. While promising in low-dimensional state settings, scaling chunk-level critics to high-dimensional multimodal inputs (vision, language, and proprioception) remains challenging.

\textit{Online Fine-tuning.}
A line of work fine-tunes VLAs via online RL or interactive learning~\cite{li2025simplevla,tan2505interactive,mark2024policy,guo2025improving}. These methods typically do not learn an explicit Q-function; instead, they update the policy using on-policy algorithms such as PPO~\cite{schulman2017proximal} or GRPO~\cite{shao2024deepseekmath}, where policy gradients are estimated from trajectory-level advantage signals. While effective, online fine-tuning can incur substantial interaction costs and safety concerns, making it less practical for rapid few-shot adaptation. Moreover, trajectory-level supervision can provide relatively coarse credit assignment, which may be insufficient to correct fine-grained geometric errors in long-horizon manipulation.

\textit{Offline Value Learning.}
Alternatively, approaches such as Q-Transformer~\cite{chebotar2023q} learn Q-functions from offline data to enable end-to-end control. More closely related to our setting, V-GPS~\cite{nakamoto2025steering} and Hume~\cite{song2025hume} also learn value functions from offline datasets and make decisions by scoring and selecting among candidate actions. Similarly, ConRFT~\cite{chen2025conrft} improves VLA policies via offline RL. However, most of these methods are primarily formulated for single-step action evaluation and thus do not directly align with chunk-based VLA policies that execute temporally-extended action sequences. Moreover, these approaches~\cite{chebotar2023q,nakamoto2025steering,song2025hume,yu2020multi,ijcai2025p883} typically compress multimodal observations into a single latent vector for value prediction, which can discard fine-grained cues needed for accurate value estimation and precise action ranking.

\section{Theoretical Analysis}
\label{sec:appendix_theory}

In this appendix, we provide proofs for the convergence properties of the
\emph{Proposal-Constrained Chunked Expected-Max} operator introduced in
Sec.~\ref{subsubsec:td_chunked_emax}. Our analysis adapts the operator-level
argument of EMaQ~\cite{ghasemipour2021emaq} to the induced semi-Markov decision
process (SMDP) formed by length-$h$ action chunks.

\subsection{Convergence of the Chunked Expected-Max Operator}
\label{app:convergence_chunked_emax}

\paragraph{Induced SMDP and bounded function space.}
Let $\mathrm{A}=(a_0,\dots,a_{h-1})\in\mathcal{A}^h$ denote an action chunk.
We consider the induced SMDP with (i) the chunked discounted return
\begin{equation}
R_h(s,\mathrm{A})
:= \mathbb{E}\!\left[\sum_{k=0}^{h-1}\gamma^{k}\, r(s_k,a_k)\,\middle|\, s_0=s,\mathrm{A}\right],
\label{eq:chunk_return_def}
\end{equation}
(ii) the $h$-step transition kernel $\mathcal{P}^{h}(\cdot\mid s,\mathrm{A})$,
and (iii) the effective discount factor $\gamma^h$.
Assume bounded one-step rewards $|r(s,a)|\le r_{\max}$ and $\gamma\in(0,1)$. Then
\begin{equation}
|R_h(s,\mathrm{A})|
\le \sum_{k=0}^{h-1}\gamma^{k}r_{\max}
= \frac{r_{\max}(1-\gamma^{h})}{1-\gamma}
=: R_{\max}.
\label{eq:chunk_return_bound}
\end{equation}

Let $\mathcal{B}$ denote the space of bounded real-valued functions over chunk state-action pairs:
\[
\mathcal{B} := \{ Q : \mathcal{S}\times\mathcal{A}^h \to \mathbb{R} \mid \|Q\|_\infty < \infty \},
\qquad
\|Q\|_\infty := \sup_{(s,\mathrm{A})} |Q(s,\mathrm{A})|.
\]

(We assume $Q$ is bounded and measurable; the analysis extends to continuous $\mathcal{A}^h$
(equivalently $\mathcal{B}=L_\infty(\mathcal S\times\mathcal A^h)$ under $\|\cdot\|_\infty$).)

\begin{definition}[Chunked Expected-Max Operator]
\label{def:chunked_emax_operator}
Fix a proposal distribution $\pi_\mu(\mathrm{A}\mid s)$ over action chunks and an integer $N\ge 1$.
Define $\mathcal{T}_\mu^N:\mathcal{B}\to\mathcal{B}$ by
\begin{equation}
(\mathcal{T}_{\mu}^{N} Q)(s,\mathrm{A})
:=
R_h(s,\mathrm{A})
+\gamma^h\,\mathbb{E}_{s' \sim \mathcal{P}^{h}(\cdot\mid s,\mathrm{A})}
\Big[
\mathbb{E}_{\mathrm{A}'_{1:N}\sim \pi_{\mu}(\cdot\mid s')}
\big[\max_{i\in[N]} Q(s',\mathrm{A}'_i)\big]
\Big].
\label{eq:app_chunked_emax_operator}
\end{equation}
\end{definition}

\begin{lemma}[Well-definedness]
\label{lemma:well_defined}
Assume $|r(s,a)|\le r_{\max}$ and $\gamma\in(0,1)$, hence $|R_h(s,\mathrm{A})|\le R_{\max}$ as in
\eqref{eq:chunk_return_bound}. Then for any $Q\in\mathcal{B}$,
\[
\|\mathcal{T}_\mu^N Q\|_\infty \le R_{\max} + \gamma^h \|Q\|_\infty.
\]
In particular, $\mathcal{T}_\mu^N Q\in\mathcal{B}$.
\end{lemma}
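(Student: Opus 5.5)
The plan is to bound $(\mathcal{T}_\mu^N Q)(s,\mathrm{A})$ pointwise and then take the supremum over $(s,\mathrm{A})$. Fix $Q\in\mathcal{B}$ and an arbitrary pair $(s,\mathrm{A})$.

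First, I apply the triangle inequality to the two terms in Eq.~\eqref{eq:app_chunked_emax_operator}. The reward term is handled directly by Eq.~\eqref{eq:chunk_return_bound}, which gives $|R_h(s,\mathrm{A})|\le R_{\max}$.

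For the continuation term, the key observation is elementary. For any $s'$ and any realization $\mathrm{A}'_{1:N}$, every value $Q(s',\mathrm{A}'_i)$ lies in $[-\|Q\|_\infty,\|Q\|_\infty]$. Hence the maximum over $i\in[N]$ also lies in this interval:
\[
\Big|\max_{i\in[N]} Q(s',\mathrm{A}'_i)\Big|\le \|Q\|_\infty .
\]
The maximum of finitely many numbers bounded in absolute value by $M$ is itself bounded by $M$, since it equals one of them. Taking the inner expectation over $\mathrm{A}'_{1:N}\overset{\mathrm{i.i.d.}}{\sim}\pi_\mu(\cdot\mid s')$ preserves this bound, by monotonicity of expectation and $|\mathbb{E}X|\le\mathbb{E}|X|$. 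The outer expectation over $s'\sim\mathcal{P}^h(\cdot\mid s,\mathrm{A})$ preserves it as well. Multiplying by $\gamma^h$ then shows the continuation term is at most $\gamma^h\|Q\|_\infty$ in absolute value.

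Combining the two bounds gives $|(\mathcal{T}_\mu^N Q)(s,\mathrm{A})|\le R_{\max}+\gamma^h\|Q\|_\infty$ for every $(s,\mathrm{A})$. Taking the supremum yields the claimed inequality. Since the right-hand side is finite, $\mathcal{T}_\mu^N Q\in\mathcal{B}$.

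The only potential obstacle is measurability in the continuous-action case. The expectations must be well defined, so $(s',\mathrm{A}'_{1:N})\mapsto\max_i Q(s',\mathrm{A}'_i)$ has to be measurable. This holds because it is a finite maximum of measurable functions, each a composition of the measurable $Q$ with coordinate projections. In addition, $\pi_\mu$ and $\mathcal{P}^h$ must be Markov kernels, which I take as given by the standing assumptions. The resulting integrands are bounded and measurable, so all integrals exist and are finite. In the tabular case this issue disappears entirely.
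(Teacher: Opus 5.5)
Your proof is correct and follows the paper's argument essentially verbatim: the triangle inequality, $|R_h(s,\mathrm{A})|\le R_{\max}$, the pointwise bound $|\max_{i\in[N]} Q(s',\mathrm{A}'_i)|\le\|Q\|_\infty$ pushed through both expectations, and then the supremum over $(s,\mathrm{A})$. Your measurability remark is a reasonable addition; the paper handles that point only through its parenthetical assumption that $Q$ is bounded and measurable.
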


\begin{proof}
For any $Q \in \mathcal{B}$ and $(s,\mathrm{A})$, let $s' \sim \mathcal{P}^{h}(\cdot\mid s,\mathrm{A})$. We have:
\[
|(\mathcal{T}_\mu^N Q)(s,\mathrm{A})|
\le |R_h(s,\mathrm{A})| + \gamma^h \mathbb{E}_{s'}\mathbb{E}_{\mathrm{A}'_{1:N}}
\big[\max_{i\in[N]} |Q(s',\mathrm{A}'_i)|\big]
\le R_{\max} + \gamma^h \|Q\|_\infty,
\]
since $\max_i |Q(s',\mathrm{A}'_i)|\le \|Q\|_\infty$. Taking $\sup_{(s,\mathrm{A})}$ yields the claim.
\end{proof}

\begin{lemma}[Non-expansiveness of expected max]
\label{lemma:max_ineq}
Let $u,v$ be bounded functions and let $X_1,\dots,X_N$ be random variables.
Then
\begin{equation}
\Big|
\mathbb{E}\big[\max_{i\in[N]} u(X_i)\big]
-
\mathbb{E}\big[\max_{i\in[N]} v(X_i)\big]
\Big|
\;\le\;
\|u-v\|_\infty.
\label{eq:nonexp_expected_max}
\end{equation}
\end{lemma}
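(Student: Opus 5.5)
The plan is to prove a pointwise inequality first and then take expectations. Fix any realization $x_1,\dots,x_N$ of $(X_1,\dots,X_N)$ and set $\Delta:=\|u-v\|_\infty$. For each $i$ we have $u(x_i)\le v(x_i)+\Delta$, so
\[
\max_{i\in[N]} u(x_i)\le \max_{i\in[N]} v(x_i)+\Delta .
\]
By symmetry we also have $\max_i v(x_i)\le \max_i u(x_i)+\Delta$. Together these give the deterministic bound
\[
\Big|\max_{i\in[N]} u(x_i)-\max_{i\in[N]} v(x_i)\Big|\le \Delta
\]
for every realization. In words, the max map is $1$-Lipschitz in the sup norm on $\mathbb{R}^N$.

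The second step is to pass to expectations. Since $u$ and $v$ are bounded (and measurable, as assumed for $\mathcal B$), both $\max_i u(X_i)$ and $\max_i v(X_i)$ are bounded random variables. Their expectations are therefore finite, and linearity of expectation applies to their difference. Jensen's inequality $|\mathbb{E}[Z]|\le \mathbb{E}|Z|$, with $Z=\max_i u(X_i)-\max_i v(X_i)$, combined with the pointwise bound yields $|\mathbb{E} Z|\le \mathbb{E}|Z|\le \Delta$. This is exactly \eqref{eq:nonexp_expected_max}.

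The proof uses no independence or identical-distribution assumption on the $X_i$, which matches the statement. The only place needing any care is the measurability and integrability of the maxima, so that the expectations are well defined. This follows because a finite maximum of measurable bounded functions is measurable and bounded, so there is no real obstacle.

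In the subsequent contraction argument for Proposition~\ref{prop:chunked_emax_properties}, this lemma will be applied conditionally on $s'$ with $X_i=\mathrm A'_i\sim\pi_\mu(\cdot\mid s')$, $u=Q_1(s',\cdot)$ and $v=Q_2(s',\cdot)$.
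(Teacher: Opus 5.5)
Your proof is correct and matches the paper's argument: both establish the pointwise bound $|\max_i u(x_i)-\max_i v(x_i)|\le \|u-v\|_\infty$ and then pass to expectations via $|\mathbb{E}[Z]|\le\mathbb{E}[|Z|]$. Deriving the pointwise bound from two one-sided inequalities, and checking measurability and integrability, only spells out details the paper states directly.
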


\begin{proof}
Using $|\mathbb{E}[Z]|\le \mathbb{E}[|Z|]$,
\[
\Big|\mathbb{E}[\max_i u(X_i)-\max_i v(X_i)]\Big|
\le \mathbb{E}\Big[\,|\max_i u(X_i)-\max_i v(X_i)|\,\Big].
\]
For any realizations $\{x_i\}$,
\[
|\max_i u(x_i)-\max_i v(x_i)|
\le \max_i |u(x_i)-v(x_i)|
\le \|u-v\|_\infty.
\]
Taking expectation completes the proof.
\end{proof}

\begin{theorem}[$\gamma^h$-Contraction]
\label{thm:contraction}
For any $N\ge 1$, the operator $\mathcal{T}_\mu^N$ is a $\gamma^h$-contraction under $\|\cdot\|_\infty$:
\[
\|\mathcal{T}_\mu^N Q_1 - \mathcal{T}_\mu^N Q_2\|_\infty
\;\le\;
\gamma^h \|Q_1-Q_2\|_\infty,
\qquad \forall\, Q_1,Q_2\in\mathcal{B}.
\]
Consequently, $\mathcal{T}_\mu^N$ admits a unique fixed point $Q_\mu^N\in\mathcal{B}$ and
$(\mathcal{T}_\mu^N)^k Q_0 \to Q_\mu^N$ for any $Q_0\in\mathcal{B}$.
\end{theorem}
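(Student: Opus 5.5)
The plan is to prove the contraction pointwise and then take a supremum; the fixed-point claims follow from Banach's theorem. Fix $Q_1,Q_2\in\mathcal{B}$, which by Lemma~\ref{lemma:well_defined} are mapped into $\mathcal{B}$, and fix any $(s,\mathrm{A})$. In the difference $(\mathcal{T}_\mu^N Q_1)(s,\mathrm{A})-(\mathcal{T}_\mu^N Q_2)(s,\mathrm{A})$ the reward term $R_h(s,\mathrm{A})$ cancels exactly, since it does not depend on $Q$. What remains is
\[
\gamma^h\,\mathbb{E}_{s'\sim\mathcal{P}^h(\cdot\mid s,\mathrm{A})}\Big[\mathbb{E}_{\mathrm{A}'_{1:N}}\big[\max_{i} Q_1(s',\mathrm{A}'_i)\big]-\mathbb{E}_{\mathrm{A}'_{1:N}}\big[\max_{i} Q_2(s',\mathrm{A}'_i)\big]\Big].
\]
Here I use linearity of the outer expectation. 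This is legitimate because both inner expectations are bounded by $\|Q_j\|_\infty$ and are therefore integrable.

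Next I condition on $s'$. For each fixed $s'$, I apply Lemma~\ref{lemma:max_ineq} with $X_i=\mathrm{A}'_i\sim\pi_\mu(\cdot\mid s')$ i.i.d., $u=Q_1(s',\cdot)$ and $v=Q_2(s',\cdot)$. This bounds the absolute value of the inner difference by $\sup_{\mathrm{A}'}|Q_1(s',\mathrm{A}')-Q_2(s',\mathrm{A}')|$, which is at most $\|Q_1-Q_2\|_\infty$. Moving the absolute value inside the outer expectation via $|\mathbb{E}Z|\le\mathbb{E}|Z|$ then gives
\[
|(\mathcal{T}_\mu^N Q_1-\mathcal{T}_\mu^N Q_2)(s,\mathrm{A})|\le\gamma^h\|Q_1-Q_2\|_\infty .
\]
Taking the supremum over $(s,\mathrm{A})$ gives the contraction inequality.

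For the consequences, I note that $(\mathcal{B},\|\cdot\|_\infty)$ is a complete metric space, being the bounded (measurable) functions under the sup norm. Since $\gamma\in(0,1)$ and $h\ge1$, we have $\gamma^h\in(0,1)$. The Banach fixed-point theorem then yields a unique fixed point $Q_\mu^N\in\mathcal{B}$, and for any $Q_0\in\mathcal{B}$ the iterates $(\mathcal{T}_\mu^N)^kQ_0$ converge to it. The rate is $\|(\mathcal{T}_\mu^N)^kQ_0-Q_\mu^N\|_\infty\le\gamma^{hk}\|Q_0-Q_\mu^N\|_\infty$.

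The only delicate point I expect is measure-theoretic, and only in the continuous-chunk case. There one must check that $s'\mapsto\mathbb{E}[\max_iQ(s',\mathrm{A}'_i)]$ is measurable, so that the outer expectation is defined, and that $\mathcal{B}$ is closed under $\mathcal{T}_\mu^N$. I would dispatch this by assuming $\pi_\mu$ and $\mathcal{P}^h$ are Markov kernels, as the paper implicitly does. Then the map is a composition of measurable maps integrated against kernels, and measurability is preserved. The tabular case in Proposition~\ref{prop:chunked_emax_properties} needs no such care.
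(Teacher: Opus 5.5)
Your proposal is correct and follows the paper's proof step for step: you cancel the reward term, bound the inner expected-max difference by $\|Q_1-Q_2\|_\infty$ via Lemma~\ref{lemma:max_ineq} for each fixed $s'$, move the absolute value inside the outer expectation, take the supremum over $(s,\mathrm{A})$, and invoke Banach's fixed-point theorem. Your extra remarks on the completeness of $\mathcal{B}$, the explicit $\gamma^{hk}$ convergence rate, and measurability in the continuous-chunk case are sound refinements that the paper leaves implicit.
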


\begin{proof}
Fix $(s,\mathrm{A})$ and let $Q_1,Q_2\in\mathcal{B}$. The reward terms cancel:
\begin{align*}
|(\mathcal{T}_\mu^N Q_1)(s,\mathrm{A}) - (\mathcal{T}_\mu^N Q_2)(s,\mathrm{A})|
&= \gamma^h \Big|
\mathbb{E}_{s'} \Big[
g_N(Q_1)(s') - g_N(Q_2)(s')
\Big]\Big| \\
&\le \gamma^h \mathbb{E}_{s'} \Big| g_N(Q_1)(s') - g_N(Q_2)(s')\Big|,
\end{align*}
where
\[
g_N(Q)(s')
:= \mathbb{E}_{\mathrm{A}'_{1:N}\sim \pi_{\mu}(\cdot\mid s')}
\big[\max_{i\in[N]} Q(s',\mathrm{A}'_i)\big].
\]
Applying Lemma~\ref{lemma:max_ineq} to $u(\cdot)=Q_1(s',\cdot)$ and $v(\cdot)=Q_2(s',\cdot)$ yields
$|g_N(Q_1)(s') - g_N(Q_2)(s')| \le \|Q_1-Q_2\|_\infty$ for all $s'$.
Thus
\[
|(\mathcal{T}_\mu^N Q_1)(s,\mathrm{A}) - (\mathcal{T}_\mu^N Q_2)(s,\mathrm{A})|
\le \gamma^h \|Q_1-Q_2\|_\infty.
\]
Taking $\sup_{(s,\mathrm{A})}$ gives the contraction inequality.
Since $\gamma^h<1$ and Lemma~\ref{lemma:well_defined} ensures $\mathcal{T}_\mu^N:\mathcal{B}\to\mathcal{B}$,
Banach's fixed-point theorem implies existence/uniqueness and convergence.
\end{proof}

\begin{proposition}[Monotonicity in $N$]
\label{prop:monotonicity}
Let $Q_\mu^N$ denote the unique fixed point of $\mathcal T_\mu^N$.
For any integers $N>M\ge 1$, we have
\[
Q_\mu^N(s,\mathrm A)\ \ge\ Q_\mu^M(s,\mathrm A),
\qquad \forall s\in\mathcal S,\ \forall \mathrm A\in \mathrm{supp}(\pi_\mu(\cdot\mid s)).
\]
\end{proposition}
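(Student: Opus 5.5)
Our plan is to compare the two operators pointwise and then use monotonicity and contraction of $\mathcal T_\mu^N$ to pass the comparison to the fixed points.

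\emph{Step 1: the operators are ordered.} For any $Q\in\mathcal B$ and any state $s'$, let $\mathrm A'_1,\dots,\mathrm A'_N$ be i.i.d.\ draws from $\pi_\mu(\cdot\mid s')$. Since $N>M$, the first $M$ of these samples form a valid sample of size $M$, and pointwise
\[
\max_{i\in[N]} Q(s',\mathrm A'_i)\ \ge\ \max_{i\in[M]} Q(s',\mathrm A'_i).
\]
Taking expectations gives $g_N(Q)(s')\ge g_M(Q)(s')$, with $g_N$ as in the proof of Theorem~\ref{thm:contraction}. Hence $\mathcal T_\mu^N Q\ge \mathcal T_\mu^M Q$ pointwise for every bounded $Q$, and in fact for every $(s,\mathrm A)$, not only those in the support of $\pi_\mu$.

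\emph{Step 2: $\mathcal T_\mu^N$ is monotone.} If $Q_1\ge Q_2$ pointwise, then $\max_i Q_1(s',\mathrm A'_i)\ge\max_i Q_2(s',\mathrm A'_i)$ for every realization of the samples. Integrating over the samples and over $s'$ gives $\mathcal T_\mu^N Q_1\ge \mathcal T_\mu^N Q_2$.

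\emph{Step 3: iterate from the smaller fixed point.} Set $Q_0:=Q_\mu^M$. Step 1 gives
\[
\mathcal T_\mu^N Q_\mu^M\ \ge\ \mathcal T_\mu^M Q_\mu^M = Q_\mu^M.
\]
Applying Step 2 inductively shows that $(\mathcal T_\mu^N)^k Q_\mu^M$ is a pointwise nondecreasing sequence, each term bounded below by $Q_\mu^M$. By Theorem~\ref{thm:contraction} this sequence converges in $\|\cdot\|_\infty$ to $Q_\mu^N$. Pointwise inequalities are preserved under uniform limits, so $Q_\mu^N\ge Q_\mu^M$ everywhere. Restricting to $\mathrm A\in\mathrm{supp}(\pi_\mu(\cdot\mid s))$ gives the stated claim.

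There is no real obstacle here. The only delicate point is the coupling in Step 1: the comparison of maxima must be made on a common probability space, which is legitimate because the expectations depend only on the distributions of the samples. In the continuous case we also need the maxima to be measurable, which follows from the standing assumption that $Q$ is bounded and measurable.
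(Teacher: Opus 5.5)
Your proof is correct and follows the paper's argument: you couple the first $M$ of $N$ i.i.d.\ samples to get $\mathcal T_\mu^N Q \ge \mathcal T_\mu^M Q$, then iterate the order-preserving contraction $\mathcal T_\mu^N$ starting from $Q_\mu^M$ and pass to the uniform limit $Q_\mu^N$. Your added remark that the inequality holds for every $(s,\mathrm A)$, not only on $\mathrm{supp}(\pi_\mu(\cdot\mid s))$, is also correct.
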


\begin{proof}
For any bounded $Q$ and any $(s,\mathrm A)$, draw $\mathrm A'_{1:N}\sim \pi_\mu(\cdot\mid s')$ once and
use the first $M$ samples for the $M$-sample backup. Then for every realization,
\[
\max_{i\in[N]} Q(s',\mathrm A'_i)\ \ge\ \max_{i\in[M]} Q(s',\mathrm A'_i).
\]
Taking expectation over $\mathrm A'_{1:N}$ and $s'$ yields $(\mathcal T_\mu^N Q)(s,\mathrm A)\ge (\mathcal T_\mu^M Q)(s,\mathrm A)$.
Now apply the above pointwise inequality to $Q_\mu^M$. Since $Q_\mu^M=\mathcal T_\mu^M Q_\mu^M$, we have
$Q_\mu^M \le \mathcal T_\mu^N Q_\mu^M$.
Moreover, $\mathcal T_\mu^N$ is order-preserving in $Q$ (max and expectation preserve order), hence
$(\mathcal T_\mu^N)^k Q_\mu^M \ge Q_\mu^M$ for all $k$.
Taking $k\to\infty$ and using Theorem~\ref{thm:contraction} yields
$Q_\mu^N = \lim_{k\to\infty} (\mathcal T_\mu^N)^k Q_\mu^M \ge Q_\mu^M$.

\end{proof}
\begin{theorem}[Limit as $N\to\infty$ (sketch)]
\label{thm:Ninf}
Under standard regularity assumptions (e.g., no ties / continuous proposal density),
the fixed points $\{Q_\mu^{N}\}_{N\ge 1}$ converge pointwise to the optimal value function
whose actions are restricted to the support of $\pi_\mu$, denoted $Q^\star_{\pi_\mu}$:
\[
\lim_{N\to\infty} Q_\mu^{N} = Q^\star_{\pi_\mu}.
\]
\end{theorem}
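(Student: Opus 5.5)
The plan is to define the target operator explicitly and show that the $\mathcal T_\mu^N$ converge to it in a sense strong enough to pass to fixed points. For bounded $Q$ and each $s'$, let $\operatorname{ess\,sup}_{\pi_\mu} Q(s',\cdot)$ denote the essential supremum of $Q(s',\cdot)$ over $\mathrm{supp}(\pi_\mu(\cdot\mid s'))$. Define
\[
(\mathcal T_\mu^\infty Q)(s,\mathrm A):=R_h(s,\mathrm A)+\gamma^h\,\mathbb E_{s'\sim\mathcal P^h(\cdot\mid s,\mathrm A)}\big[\operatorname{ess\,sup}_{\pi_\mu} Q(s',\cdot)\big].
\]
The proof of Theorem~\ref{thm:contraction} carries over verbatim: the essential supremum is $1$-Lipschitz in $\|\cdot\|_\infty$, just as the expected max is in Lemma~\ref{lemma:max_ineq}. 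So $\mathcal T_\mu^\infty$ is a $\gamma^h$-contraction, and its unique fixed point is exactly the support-restricted optimal value $Q^\star_{\pi_\mu}$.

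Next, for fixed bounded $Q$, I will show $g_N(Q)(s')\uparrow \operatorname{ess\,sup}_{\pi_\mu}Q(s',\cdot)$ for every $s'$. Monotonicity in $N$ follows from the coupling argument in Proposition~\ref{prop:monotonicity}. The upper bound is immediate, since almost surely each sample lies in the support. For the lower bound, fix $\eta>0$ and set $p_\eta(s'):=\pi_\mu(\{Q(s',\cdot)>\operatorname{ess\,sup}-\eta\}\mid s')>0$, which is positive by the definition of essential supremum. Then
\[
g_N(Q)(s')\ge \operatorname{ess\,sup}-\eta-(1-p_\eta)^N\cdot 2\|Q\|_\infty,
\]
and letting $N\to\infty$ and then $\eta\to0$ gives the limit. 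Bounded convergence inside $\mathbb E_{s'}$ then gives $\mathcal T_\mu^N Q\to\mathcal T_\mu^\infty Q$ pointwise.

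To pass to fixed points, I will use monotonicity. By Proposition~\ref{prop:monotonicity}, $Q_\mu^N$ is nondecreasing in $N$. It is also bounded by $R_{\max}/(1-\gamma^h)$, by Lemma~\ref{lemma:well_defined} applied to the fixed point. Hence $Q_\mu^N\uparrow \bar Q$ pointwise. Monotonicity of $\mathcal T_\mu^N$ in both $N$ and $Q$ gives $\mathcal T_\mu^N\le\mathcal T_\mu^\infty$ and $Q_\mu^N\le Q^\star_{\pi_\mu}$, which yields $\bar Q\le Q^\star_{\pi_\mu}$.

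For the reverse inequality, fix $M\le N$. Then
\[
Q_\mu^N=\mathcal T_\mu^N Q_\mu^N\ge \mathcal T_\mu^N Q_\mu^M.
\]
Letting $N\to\infty$ with $M$ fixed gives $\bar Q\ge \mathcal T_\mu^\infty Q_\mu^M$. Letting $M\to\infty$ should then give $\bar Q\ge\mathcal T_\mu^\infty\bar Q$, and iterating the contraction yields $\bar Q\ge Q^\star_{\pi_\mu}$.

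The main obstacle is this last step: exchanging $M\to\infty$ with the essential supremum. Monotone pointwise convergence does not, in general, commute with an essential supremum of a limit. Here the direction is favorable, because $\operatorname{ess\,sup}_{\pi_\mu} Q_\mu^M(s',\cdot)$ is nondecreasing in $M$ and bounded above by $\operatorname{ess\,sup}_{\pi_\mu}\bar Q(s',\cdot)$. However, equality of the limit with the right-hand side requires more.

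I will invoke the stated regularity assumption, a continuous proposal density, and in the tabular case simply the finiteness of the support. Under finite support, ess sup is a finite max, and pointwise convergence commutes with it. Otherwise I will assume equicontinuity of $Q_\mu^N(s',\cdot)$ so that Dini's theorem gives uniform convergence on compact supports. In either case $\|\mathcal T_\mu^\infty Q_\mu^M-\mathcal T_\mu^\infty \bar Q\|_\infty\to0$ and the argument closes. This is the step I would flag as genuinely requiring the "regularity assumptions" in the sketch statement.
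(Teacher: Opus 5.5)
Your proposal follows the same route as the paper's sketch. You show $g_N(Q)\uparrow\operatorname{ess\,sup}$, hence $\mathcal T_\mu^N\to\mathcal T_\mu^\infty$ pointwise, and then use monotonicity in $N$ plus contraction to pass to the fixed points; your sandwich $Q_\mu^N\le Q^\star_{\pi_\mu}$, $\bar Q\ge\mathcal T_\mu^\infty Q_\mu^M$ actually supplies the fixed-point step that the paper only delegates to EMaQ.

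The step you flag as the main obstacle is not one, and it needs no extra regularity. Let $c$ be the (nondecreasing) limit of $\operatorname{ess\,sup}_{\pi_\mu}Q_\mu^M(s',\cdot)$. Then each $Q_\mu^M(s',\cdot)\le c$ off a null set, and the countable union of these null sets is null, so $\bar Q(s',\cdot)=\sup_M Q_\mu^M(s',\cdot)\le c$ almost surely. In other words, essential suprema commute with monotone increasing limits. Bounded convergence in $s'$ then gives $\mathcal T_\mu^\infty Q_\mu^M\to\mathcal T_\mu^\infty\bar Q$ pointwise, which is all your argument requires; you do not need finite support, equicontinuity, Dini, or uniform convergence.
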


\begin{proof}[Proof sketch]
For any fixed $Q$ and state $s'$, define
\[
g_N(Q)(s')
=
\mathbb{E}_{\mathrm{A}'_{1:N}\sim\pi_\mu(\cdot\mid s')}
\Big[\max_{i\in[N]} Q(s',\mathrm{A}'_i)\Big].
\]
Then $g_N(Q)(s')$ is non-decreasing in $N$ and converges to the essential supremum of $Q(s',\mathrm{A})$
over $\mathrm{A}\sim\pi_\mu(\cdot\mid s')$.
Consequently, $\mathcal{T}_\mu^N$ converges (pointwise) to the support-restricted optimality operator.
Monotonicity in $N$ (Prop.~\ref{prop:monotonicity}),
together with contraction (Thm.~\ref{thm:contraction}),
yields convergence of the fixed points; see the corresponding EMaQ analysis~\cite{ghasemipour2021emaq}.
\end{proof}


\subsection{\texttt{EGR} Anchoring Envelope and Best-of-$N$ Bound}
\label{app:egr_bestofn_bound}

We formalize how the \texttt{EGR} regression objective induces an \emph{upper-envelope} inequality
over off-demo candidates sampled from $\rho(\cdot\mid s_t)$, and how such an envelope
immediately bounds the Best-of-$N$ maximization used in the chunked TD target.

\paragraph{Setup.}
Fix a data (demonstration) pair $(s_t,\mathrm A_t)\sim \mathcal D$ and an auxiliary candidate distribution
$\rho(\cdot\mid s_t)$.
Recall the \texttt{EGR} reference surface (Eq.~\eqref{eq:ref_surface}):
\begin{equation}
\mathcal Y(s_t,\mathrm A'_t)
:= \operatorname{sg}(y_t) - \beta \|\mathrm A'_t-\mathrm A_t\|_{\mathcal W}^2,
\label{eq:Y_def_appendix}
\end{equation}
where $\beta>0$ and $y_t$ is the chunk-level TD anchor (Eq.~\eqref{eq:td_loss_final}).
Define the pointwise \texttt{EGR} residual
\begin{equation}
\delta_\theta(s_t,\mathrm A'_t)
:= Q_\theta(s_t,\mathrm A'_t) - \mathcal Y(s_t,\mathrm A'_t).
\label{eq:delta_def_appendix}
\end{equation}

\paragraph{From \texttt{EGR} regression to an upper envelope.}
The anchoring loss encourages $\delta_\theta(s_t,\mathrm A'_t)$ to be small on candidates $\mathrm A'_t\sim\rho(\cdot\mid s_t)$.
To make this statement explicit, we assume the residual is uniformly upper-bounded on the candidate set.

\begin{lemma}[Anchoring-induced upper envelope via a residual bound]
\label{lem:egr_upper_envelope}
Assume the residual admits a uniform upper bound on the candidate set:
\begin{equation}
\sup_{\mathrm A'_t\in \mathrm{supp}(\rho(\cdot\mid s_t))}
\delta_\theta(s_t,\mathrm A'_t)
\le \varepsilon,
\label{eq:residual_upper_bound}
\end{equation}
for some $\varepsilon\ge 0$.
Then for all $\mathrm A'_t\in \mathrm{supp}(\rho(\cdot\mid s_t))$ we have the upper-envelope inequality
\begin{equation}
Q_\theta(s_t,\mathrm A'_t)
\le
\operatorname{sg}(y_t)
-\beta\|\mathrm A'_t-\mathrm A_t\|_{\mathcal W}^2
+\varepsilon.
\label{eq:egr_upper_envelope}
\end{equation}
\end{lemma}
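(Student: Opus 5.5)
The claim follows directly from the definitions of $\delta_\theta$ and $\mathcal Y$ in \eqref{eq:Y_def_appendix}--\eqref{eq:delta_def_appendix}, so the plan is a pointwise rearrangement rather than any appeal to the contraction results of Theorem~\ref{thm:contraction}. I will fix the demonstration pair $(s_t,\mathrm A_t)$ and an arbitrary candidate $\mathrm A'_t\in\mathrm{supp}(\rho(\cdot\mid s_t))$. I will treat $\operatorname{sg}(y_t)$ as a fixed real number, since the stop-gradient only affects differentiation and not the value. Likewise $\beta\|\mathrm A'_t-\mathrm A_t\|_{\mathcal W}^2$ is a finite nonnegative scalar, because $\mathcal W$ defines a (semi)norm on the normalized chunk space.

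First I will rewrite the identity $Q_\theta(s_t,\mathrm A'_t)=\mathcal Y(s_t,\mathrm A'_t)+\delta_\theta(s_t,\mathrm A'_t)$, which holds by definition of the residual. Second, since $\mathrm A'_t$ lies in the support, the hypothesis \eqref{eq:residual_upper_bound} gives $\delta_\theta(s_t,\mathrm A'_t)\le\sup_{\mathrm A''}\delta_\theta(s_t,\mathrm A'')\le\varepsilon$. Third, I will substitute the expression for $\mathcal Y$ to obtain \eqref{eq:egr_upper_envelope}. Because $\mathrm A'_t$ was arbitrary, the inequality holds on the whole support.

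The only point needing care is bookkeeping rather than difficulty. The hypothesis is stated over $\mathrm{supp}(\rho)$, whereas Proposition~\ref{prop:egr_bestofn_bound} phrases it over $\mathrm{supp}(\pi_\mu)$. I will keep the lemma on $\mathrm{supp}(\rho)$ exactly as stated. Where the lemma is later used, I will note that it transfers whenever the two supports are nested, e.g.\ when $\rho$ is a proposal-centered mixture containing $\pi_\mu$.

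I will also note the consequence for the subsequent Best-of-$N$ bound. Dropping the nonpositive term $-\beta\|\cdot\|_{\mathcal W}^2$ gives $Q_\theta\le\operatorname{sg}(y_t)+\varepsilon$ for each candidate. Maximizing over $N$ almost surely supported samples then yields \eqref{eq:bestofn_bound_main}.
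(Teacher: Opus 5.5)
Your proof is correct and matches the paper's argument: write $Q_\theta=\mathcal Y+\delta_\theta$ by definition of the residual, bound $\delta_\theta\le\varepsilon$ on $\mathrm{supp}(\rho(\cdot\mid s_t))$, and substitute the definition of $\mathcal Y$. Your side remarks also agree with how the paper uses the lemma, namely transferring the bound to $\mathrm{supp}(\pi_\mu)$ via the mixture construction of $\rho$ and deriving the resulting Best-of-$N$ bound.
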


\begin{proof}
From the residual definition in~\eqref{eq:delta_def_appendix}, for any $\mathrm A'_t$ we have the identity
\begin{equation}
Q_\theta(s_t,\mathrm A'_t)
=
\mathcal Y(s_t,\mathrm A'_t) + \delta_\theta(s_t,\mathrm A'_t).
\label{eq:Q_decompose}
\end{equation}
For any $\mathrm A'_t\in \mathrm{supp}(\rho(\cdot\mid s_t))$, the bound Eq.~\eqref{eq:residual_upper_bound}
implies $\delta_\theta(s_t,\mathrm A'_t)\le \varepsilon$.
Substituting into Eq.~\eqref{eq:Q_decompose} yields
$
Q_\theta(s_t,\mathrm A'_t)\le \mathcal Y(s_t,\mathrm A'_t)+\varepsilon.
$
Finally, plugging in Eq.~\eqref{eq:Y_def_appendix} gives Eq.~\eqref{eq:egr_upper_envelope}.
\end{proof}

\paragraph{Bounding Best-of-$N$ under the envelope.}
We now show the envelope immediately bounds the Best-of-$N$ maximization.
Recall that the \texttt{EGR} loss is minimized over a broad OOD distribution $\rho(\cdot\mid s_t)$. In our implementation (Appendix~\ref{sec:impl_details}), $\rho$ is constructed as a mixture that explicitly includes samples from the proposal policy $\pi_\mu$. Consequently, $\mathrm{supp}(\pi_\mu) \subseteq \mathrm{supp}(\rho)$. Thus, if the envelope holds for $\rho$, it naturally holds for $\pi_\mu$.
\begin{proposition}[Best-of-$N$ bound under an \texttt{EGR}-style upper envelope]
\label{prop:synergy_appendix}
Let $(s_t,\mathrm A_t)\sim\mathcal D$ and let $\pi_\mu(\cdot\mid s_t)$ be the candidate sampling distribution.
Draw $N$ candidates $\{\mathrm A_t^{\prime i}\}_{i=1}^N\sim \pi_\mu(\cdot\mid s_t)$.
If the envelope Eq.~\eqref{eq:egr_upper_envelope} holds for all $\mathrm A'_t\in \mathrm{supp}(\pi_\mu(\cdot\mid s_t))$ {(inherited from $\rho$)},
then

\begin{equation}
\max_{i\in[N]} Q_\theta(s_t,\mathrm A_t^{\prime i})
\le
\operatorname{sg}(y_t)
-\beta \min_{i\in[N]}\|\mathrm A_t^{\prime i}-\mathrm A_t\|_{\mathcal W}^2
+\varepsilon
\le \operatorname{sg}(y_t)+\varepsilon.
\label{eq:bestofn_bound_appendix}
\end{equation}
\end{proposition}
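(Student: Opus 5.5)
The plan is to apply the envelope of Lemma~\ref{lem:egr_upper_envelope} pointwise to each of the $N$ sampled candidates and then pass to the maximum over $i$. First, since every draw $\mathrm A_t^{\prime i}\sim\pi_\mu(\cdot\mid s_t)$ lies (almost surely) in $\mathrm{supp}(\pi_\mu(\cdot\mid s_t))\subseteq\mathrm{supp}(\rho(\cdot\mid s_t))$, the hypothesis gives, for each $i\in[N]$,
\[
Q_\theta(s_t,\mathrm A_t^{\prime i})\le \operatorname{sg}(y_t)-\beta d_i+\varepsilon,
\qquad d_i:=\|\mathrm A_t^{\prime i}-\mathrm A_t\|_{\mathcal W}^2 .
\]
This step is simply the identity $Q_\theta=\mathcal Y+\delta_\theta$ together with $\delta_\theta\le\varepsilon$ on the support, already established in the lemma.

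Second, I take the maximum over $i$ on both sides. Monotonicity of $\max$ gives $\max_i Q_\theta(s_t,\mathrm A_t^{\prime i})\le \max_i(\operatorname{sg}(y_t)-\beta d_i+\varepsilon)$. Because $\beta>0$, the map $d\mapsto -\beta d$ is decreasing, so $\max_i(-\beta d_i)=-\beta\min_i d_i$. This yields the first inequality in \eqref{eq:bestofn_bound_appendix}.

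Third, the weighted squared norm is nonnegative, assuming $\mathcal W$ is positive semidefinite (for instance, nonnegative diagonal weights, as described in Appendix~\ref{weight_detail}). Hence $\min_i d_i\ge 0$, and dropping the term $-\beta\min_i d_i\le 0$ gives the second inequality $\le\operatorname{sg}(y_t)+\varepsilon$. The main-text Proposition~\ref{prop:egr_bestofn_bound}, which states the bound with candidates drawn from $\rho$, follows directly from the assumed envelope on $\mathrm{supp}(\rho)$ by the same two steps.

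The only delicate point is the support inclusion $\mathrm{supp}(\pi_\mu)\subseteq\mathrm{supp}(\rho)$. I would justify it from the mixture construction of $\rho$: it places positive weight on $\pi_\mu$, so any set of positive $\pi_\mu$-mass has positive $\rho$-mass. I would also note that sampled points fall in the support only almost surely, so the bound should be stated as holding almost surely over the draws. The rest is elementary.
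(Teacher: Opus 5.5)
Your proof is correct and follows the paper's argument exactly: apply the envelope to each sampled candidate, take the maximum over $i$ using that $d\mapsto-\beta d$ is decreasing to get $-\beta\min_i d_i$, then drop this nonpositive term because the weighted squared norm is nonnegative. Your extra remarks are valid refinements that the paper leaves implicit: the bound holds almost surely over the draws, and $\mathrm{supp}(\pi_\mu)\subseteq\mathrm{supp}(\rho)$ follows from the mixture $\rho$ dominating a positive multiple of $\pi_\mu$.
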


\begin{proof}
Apply Eq.~\eqref{eq:egr_upper_envelope} to each sampled candidate $\mathrm A_t^{\prime i}$:
\[
Q_\theta(s_t,\mathrm A_t^{\prime i})
\le
\operatorname{sg}(y_t)
-\beta \|\mathrm A_t^{\prime i}-\mathrm A_t\|_{\mathcal W}^2
+\varepsilon.
\]
Taking $\max_{i\in[N]}$ on both sides gives
\[
\max_{i\in[N]} Q_\theta(s_t,\mathrm A_t^{\prime i})
\le
\max_{i\in[N]}
\left(
\operatorname{sg}(y_t)
-\beta \|\mathrm A_t^{\prime i}-\mathrm A_t\|_{\mathcal W}^2
+\varepsilon
\right).
\]
Since $\operatorname{sg}(y_t)$ and $\varepsilon$ do not depend on $i$, and $-\beta(\cdot)$ is decreasing in the distance term,
\[
\max_{i\in[N]}
\left(
\operatorname{sg}(y_t)
-\beta \|\mathrm A_t^{\prime i}-\mathrm A_t\|_{\mathcal W}^2
+\varepsilon
\right)
=
\operatorname{sg}(y_t)
-\beta \min_{i\in[N]}\|\mathrm A_t^{\prime i}-\mathrm A_t\|_{\mathcal W}^2
+\varepsilon,
\]
which proves the first inequality in Eq.~\eqref{eq:bestofn_bound_appendix}.
The second inequality follows since $\min_i \|\mathrm A_t^{\prime i}-\mathrm A_t\|_{\mathcal W}^2\ge 0$.
\end{proof}

\paragraph{Remark.}
The residual bound Eq.~\eqref{eq:residual_upper_bound} is not an additional algorithmic constraint:
it is a compact way to express that the \texttt{EGR} regression error on the candidate set is small.
In practice, $\varepsilon$ can be interpreted as the worst-case anchoring fit error over candidates drawn from $\pi_\mu(\cdot\mid s_t)$.

\section{Visualization Methodology and Detailed Analysis}
\label{app:viz_analysis}

Figure~\ref{fig:value_map_comparison} visualizes the landscape of estimated state-action-chunk values on the LIBERO-Object benchmark. To interpret the high-dimensional action chunks (dimension $h \times d_{\text{action}}$), we use Principal Component Analysis (PCA) to project proposal candidates and ground-truth (GT) demonstrations onto a 2D plane, normalizing estimated $Q$-values to $[-1, 1]$ for comparison. The color gradient---from yellow (high value) to dark blue (low value)---reveals clear differences in landscape topology between the two approaches. The top row shows that CQL-style regularization results in a collapsed value landscape: it indiscriminately suppresses all proposal candidates (white dots) to a uniformly low value regardless of their quality, failing to differentiate ``near-miss'' proposals from failures. In contrast, the bottom row demonstrates that VGAS (with \texttt{EGR}) yields a discriminative value landscape. Instead of a binary plateau, the value signal exhibits a graded geometric preference, decaying smoothly as candidates deviate from the expert trajectory. This confirms that \texttt{EGR} acts as a structural inductive bias,  enabling the critic to meaningfully rank candidates based on their physical proximity to the optimal solution.

We further observe distinct spatial and temporal characteristics in the learned landscapes. Spatially, there is a visible gap between the proposal distribution  and the GT (red dot); we attribute this to the mean-seeking bias of the SFT policy, which tends to generate smoothed trajectories compared to the sharper, high-frequency control signals of human experts. Crucially, VGAS maintains a valid gradient across this gap, guiding selection toward the expert mode despite the distribution shift. Temporally, the value maps exhibit dynamic coherence: adjacent timesteps ($t=0$ and $t=1$) share similar topologies, whereas distant steps ($t=50$ and $t=120$) show significant structural differences. This indicates that the VGAS does not merely memorize static geometric relations but adaptively adjusts its estimation according to evolving real-world dynamics, providing state-aware guidance throughout the entire task horizon.

\section{Implementation Details}
\label{sec:impl_details}

\subsection{Experimental Setup}
\label{sec:exp_setup}

\paragraph{(i) Reward function.}
Consistent with real-world scenarios where explicit rewards are scarce, the LIBERO dataset contains no reward annotations. We therefore construct a sparse binary reward by labeling the last $h$ steps of each demonstration rollout as successful. Following~\cite{nakamoto2025steering}, we set $h=3$. Additionally, we use shifted rewards $\{-1,1\}$ instead of $\{0,1\}$, which we found to yield more stable learning in practice.
\paragraph{(ii) Training protocol.}
We first perform supervised fine-tuning (SFT) of the VLA model using 5-shot expert demonstrations per task, randomly sampled from the LIBERO dataset. We then train a critic using different variants of offline RL (ORL) objectives on the resulting offline data.
\paragraph{(iii) OOD candidate generation.}
Following the conservative offline RL intuition (e.g., CQL) that penalizes actions with low support under the offline dataset, we construct OOD candidates by sampling from $\rho(\cdot\mid s_t)$, instantiated as a mixture of: (i) proposals from the frozen VLA policy $\pi_\mu(\cdot\mid s_t)$; (ii) Gaussian perturbations of the demonstration (ground-truth) chunk $\mathrm A_t$ (GT$+$noise); (iii) prefix-truncated variants of $\mathrm A_t$ (early-terminated chunks); and (iv) linear interpolations between the demonstration chunk and a policy proposal, i.e., $\tilde{\mathrm A}_t
= \alpha\, \mathrm A_t + (1-\alpha)\, \hat{\mathrm A}_t .$ and $\alpha\in[0,1]$.
\begin{figure*}[t]
  \centering
  \includegraphics[width=\textwidth]{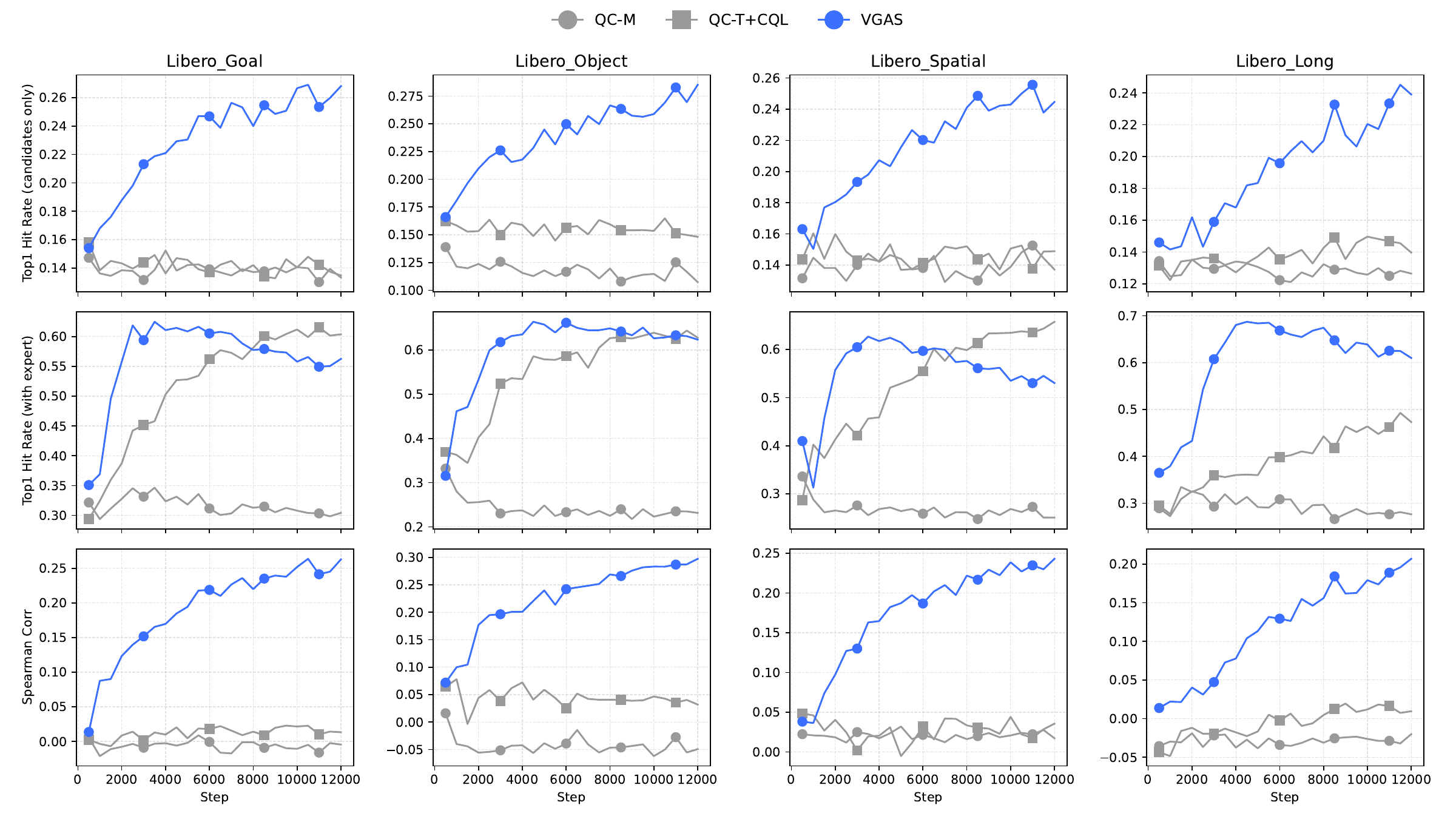}
  \caption{Offline Ranking Evaluation on Held-out Data.}
  \label{fig:top1_and_spearman}
  \vspace{-0.2cm}
\end{figure*}
\paragraph{(iv) Backbone Architecture Details.} We adopt SmolVLA-0.5B~\cite{shukor2025smolvla} as the underlying policy backbone. Built upon the SmolVLM2 architecture, it employs a SigLIP visual encoder to process high-dimensional observations. These visual tokens are fused with language instructions and proprioceptive states within a decoder-only Transformer. The policy head utilizes a flow-matching objective to generate high-dimensional action chunks in parallel, which serves as the proposal distribution for our VGAS framework. 

\subsection{Baseline Implementation Details}
\label{sec:baseline}

\paragraph{Base Policy Training.}
To evaluate performance in a realistic data-scarce regime, we adopt a standardized two-stage training protocol for all methods. First, we obtain the base proposal policy $\pi_\mu$ by fine-tuning the SmolVLA backbone using only 5 expert demonstrations per task, randomly sampled from the LIBERO benchmark\footnote{\url{https://huggingface.co/datasets/HuggingFaceVLA/libero}}. This fine-tuning is conducted via the official LeRobot~\cite{cadene2024lerobot} repository\footnote{\url{https://github.com/huggingface/lerobot}}. In the second stage, we freeze this few-shot adapted policy to serve as a static proposal generator. Crucially, the offline dataset $\mathcal{D}$ used to train the critic (for both VGAS and baselines) is constructed exclusively from these same 5-shot demonstrations, ensuring that value learning operates under the same strict data-scarce constraints.

\paragraph{Baseline Configurations.}
Unless stated otherwise, all methods keep $\pi_\mu$ fixed and differ only in the critic objective and architecture.
\begin{itemize}
    \item \textbf{BC-Only:} This baseline directly executes the base policy $\pi_\mu$ obtained in the first stage without any test-time selection.
    
    \item \textbf{QC-M (Q-Chunking-MLP):} Q-Chunking~\cite{li2025reinforcement} proposes two variants: QC (Best-of-$N$ backup) and QC-FQL (Flow Matching distillation). We adopt the QC variant for direct comparison. Following the official implementation, we use an MLP critic with hidden dimensions $[512,512,512,512]$.
    For state encoding, we pool the VLM tokens ($I_t, L_t$ and $p_t$) from the backbone into a fixed representation (dim=960). The action chunk ($h \times d_a$) is flattened and projected via an MLP to match the state dimension. Finally, the state and action chunk representations are concatenated and fed into the critic.
    
    \item \textbf{QC-M+CQL:} We augment QC-M by adding the CQL regularization term. We use the default coefficient $\alpha=5.0$, which performs comparably to alternative choices (e.g., $\alpha=2.0$) in our validation.
    The negative (OOD) action pool is sampled from the proposal-centered distribution $\pi_\mu(\cdot\mid s_t)$, ensuring alignment with our method's $\rho(\cdot\mid s_t)$ construction for fair comparison

    \item \textbf{QC-T+CQL:} We retain the QC-M+CQL objective but replace the MLP backbone with our Transformer-based \textrm{Q-Chunk-Former}. This baseline isolates the effect of critic architecture under identical conservative constraints.
\end{itemize}

\begin{table}[t]
\centering
\caption{Hyperparameters for VGAS.}
\label{tab:hyperparams_vgas}
\begin{tabular}{@{}l l@{}}
\toprule
\textbf{Hyperparameter} & \textbf{Value}\\
\midrule
Learning rate & 0.0001\\
Optimizer & Adam \cite{kingma2014adam} \\
 Warm up step&1000\\
Gradient steps & 12000 (default), 20000 (Libero-Goal, Libero-long)\\
 Gradient-clip&10.0\\
Minibatch size & 32\\
Q-chunk-Former layers& 2\\
Q-chunk-Former hidden dimensions& 960\\
State-Action-Fusion Mlp dimensions& 960\\
Q chunk len& 32(default), 50 (Libero Long)\\
 N-action-step&20\\
Vlaue Head MLP dimensions& [512, 512]\\
Vlaue head Nonlinearity& GELU \cite{hendrycks2016gaussian}\\
Target network smoothing coefficient& 0.005\\
Discount factor $\gamma$& 0.98 (default), 0.99 (Libero-Long)\\
Clipped double Q-learning& True\\
Q aggregation (twin critics) & Min\\
\bottomrule

\end{tabular}
\end{table}

\begin{table}[t]
\centering
\caption{Hyperparameters for Different Task suit.}
\label{tab:hyperparams_task_specific}
\begin{tabular}{lcccc}
\toprule
Task suit & $\lambda$ & $\beta$ & $\eta$ & $\mathrm w$ \\
\midrule
Libero-Spatial & 5.0 & 5.0 & 1.0 & {5 5 5 1 1 1 1} \\
Libero-Goal    & 5.0 & 5.0 & 1.0 & {5 5 5 1 1 1 1} \\
Libero-Object  & 2.0 & 2.0 & 1.0 & {5 5 5 1 1 1 1} \\
Libero-Long    & 5.0 & 5.0 & 1.0 & {5 5 5 1 1 1 1} \\
\bottomrule
\end{tabular}
\end{table}

\paragraph{Shared rollout settings.}
Unless otherwise specified, all methods use the same chunk length $h=32$ (or $h=50$ for LIBERO-Long) to estimate $Q(s_t,\mathrm{A}_t)$, where $\mathrm{A}_t := (a_t, a_{t+1}, \dots, a_{t+h-1})$ denotes an $h$-step action chunk. During execution, we apply only the first $n_{\text{exec}}=20$ action steps of each selected chunk, i.e., $(a_t, a_{t+1}, \dots, a_{t+n_{\text{exec}}-1})$ with $n_{\text{exec}}<h$.

\paragraph{VGAS.}
We report the complete list of hyperparameters in Table~\ref{tab:hyperparams_vgas}, and task-specific overrides in Table~\ref{tab:hyperparams_task_specific}. Our \textrm{Q-Chunk-Former} is initialized from the first two layers of the SmolVLM backbone. We directly reuse the multimodal features extracted by the frozen SmolVLM (i.e., the output of the SmolVLA encoder) as the vision--language input to \textrm{Q-Chunk-Former}. In our notation, the \emph{Q-chunk length} $h$ denotes the length of an action chunk, while \emph{$N$-action-step} indicates that we execute only the first $N$ primitive actions within each predicted chunk at rollout. We use clipped double Q-learning: the \texttt{TD} target is computed using the minimum of the two target critics. 

\subsection{Weighted metric implementatipn Details}
\label{weight_detail}
In the Explicit Geometric Regularization (\texttt{EGR}) objective (Eq.~\ref{eq:ref_surface} and Eq.~\ref{eq:egr_pair_dist}), the term $\lVert \hat{\mathrm A}_t - \mathrm A_t \rVert_{\mathcal W}^2$ denotes a \emph{weighted squared Euclidean distance}, averaged over the valid horizon of an action chunk. This metric prioritizes critical action dimensions (e.g., end-effector translation) and supports variable-length chunks via a padding mask.

Let $\mathrm A \in \mathbb{R}^{h \times d_{a}}$ be the ground-truth action chunk and $\hat{\mathrm A}\in \mathbb{R}^{h \times d_{a}}$ be a candidate chunk, where $h$ is the chunk size and $d_{a}$ is the action dimension ((for LIBERO, typically $d_a=7$: 3D end-effector position, 3D end-effector orientation (a 3-parameter representation), and 1D gripper control)
). Let $\mathrm w \in \mathbb{R}^{d_a}$ be a nonnegative weight vector. We define the masked, weighted squared distance as
\begin{equation}
\label{eq:weighted_geom_dist}
\lVert \hat{\mathrm A} - \mathrm A \rVert_{\mathcal W}^2
=
\frac{1}{\sum_{k=1}^{H} m_k}
\sum_{k=1}^{H} m_k
\sum_{j=1}^{d_a} w_j \left(\hat{a}_{k,j} - a_{k,j}\right)^2,
\end{equation}
where:
\begin{itemize}
  \item $m_k \in \{0,1\}$ is a binary mask indicating whether time step $k$ is valid (i.e., non-padding),
  \item $\hat{a}_{k,j}$ and $a_{k,j}$ denote the $j$-th action dimension at step $k$ for $\hat{\mathrm A}$ and $\mathrm A$, respectively,
  \item $\mathrm w = [w_1,\dots,w_{d_a}]$ specifies the relative importance of each control dimension.
\end{itemize}

\paragraph{Weight configuration.}
As shown in Table~\ref{tab:hyperparams_task_specific}, we set
$\mathrm w={[5, 5, 5, 1, 1, 1, 1]}$,
assigning weight $5.0$ to translational components $(x,y,z)$ and weight $1.0$ to rotation and gripper states. Empirically, penalizing position errors more heavily encourages the critic to emphasize trajectory precision, which is crucial for manipulation success.


\section{Additional Experimental Results and Analysis}
\label{app:additional_exp}
\subsection{Training Dynamics and Critic Ranking Resolution}
\label{app:training dynamics}

To verify the critic’s generalization ability, we evaluate it offline on a held-out set of 45 unseen expert episodes per task. For each observation, we sample $N=8$ proposal candidates and report the Top-1 Hit Rate (the probability of selecting the proposal that is geometrically closest to the expert) and the Spearman correlation. Fig.~\ref{fig:top1_and_spearman} reveals three key findings. First, VGAS consistently assigns higher scores to candidates that remain close to the expert, which reduces overestimation on geometrically divergent OOD actions and supports safer selection on unseen states where the policy may drift. Second, the high hit rate indicates that VGAS builds a discriminative value landscape within the proposal set. In contrast, CQL tends to flatten local value differences, weakening ranking resolution. VGAS preserves a clear ordering among “near-miss” candidates, making superior candidates easier to separate from inferior ones. Finally, comparing the “With Expert” setting (middle row) and the “Candidates Only” setting (top row) highlights an important difference. CQL can assign high scores to the expert action when it is included, but its local ranking becomes less reliable when only proposal candidates are available. VGAS performs well in both settings. Although geometric proximity is not a perfect proxy for task success, it provides a practical signal in the few-shot imitation regime, where unconstrained maximization is brittle.

\begin{figure}[H]
  \centering
  \includegraphics[width=0.7\textwidth]{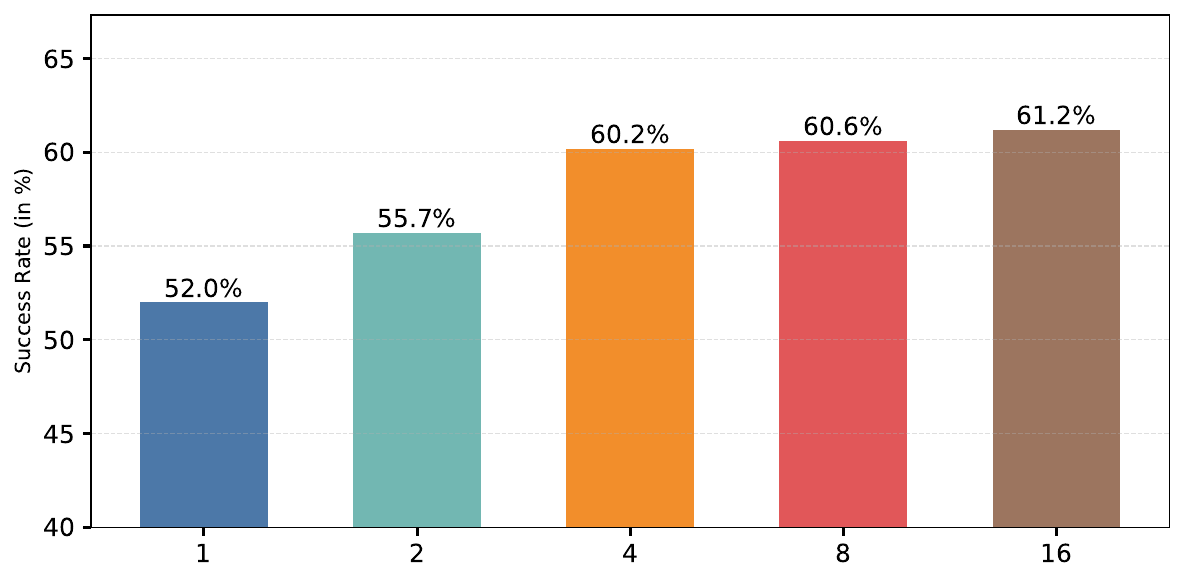}
  \caption{Impact of Inference Budget $N$. Evaluation on LIBERO-Goal showing monotonic improvement from the baseline ($N=1$) to saturation around $N=8$.}
  \label{fig:study_n}
  \vspace{-0.2cm}
\end{figure}
\FloatBarrier

\subsection{Sensitivity to Inference Budget }

We study how the number of sampled proposals $N$ affects VGAS, using the LIBERO-Goal suite as a representative case. As shown in Fig.~\ref{fig:study_n}, the success rate increases monotonically with the inference budget. Starting from the base policy ($N{=}1$, $52.0\%$), applying Best-of-$N$ selection yields substantial gains, reaching $60.2\%$ with only $N{=}4$. This rapid improvement indicates that the proposal policy often generates high-quality ``near-miss'' candidates that are not selected under greedy sampling but can be reliably retrieved by our geometrically regularized critic. Beyond $N{=}8$ ($60.6\%$), the gains diminish, with only a marginal improvement at $N{=}16$ ($61.2\%$). We therefore use $N{=}8$ as the default, balancing performance against inference latency in the main experiments.

\subsection{Additional Policy Baselines}
\label{app:vla_baselines}

\begin{figure}[t]
  \centering
  \includegraphics[width=0.7\textwidth]{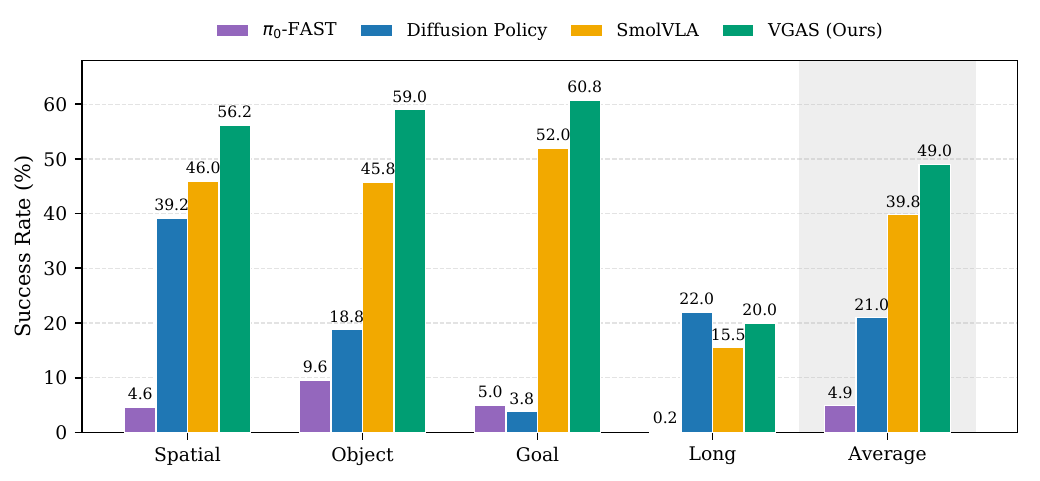}
  \caption{Additional VLA baselines under the 5-shot LIBERO setting. VGAS improves over the SmolVLA base policy and achieves the best average success rate among the compared methods.}
  \label{fig:additional_vla_baselines}
\end{figure}

Beyond the SmolVLA backbone used in our main controlled experiments, we also compare VGAS with two additional policies under the same 5-shot LIBERO setting: $\pi_0$-FAST and Diffusion Policy. $\pi_0$-FAST~\cite{pertsch2025fast} is an autoregressive action-token VLA policy that represents a continuous action chunk as a discrete token sequence and decodes the generated tokens back into continuous control. Diffusion Policy~\cite{chi2025diffusion} directly models continuous action sequences through iterative denoising. For both baselines, we follow the official implementations and use their default hyperparameter settings unless otherwise specified.

As shown in Fig.~\ref{fig:additional_vla_baselines}, VGAS achieves the best average success rate and consistently improves over the SmolVLA base policy, showing that value-guided selection provides gains beyond the underlying VLA proposal generator. $\pi_0$-FAST performs poorly in the 5-shot setting, likely due to the sensitivity of autoregressive action-token generation to token-level errors under scarce supervision. Diffusion Policy performs reasonably on LIBERO-Spatial and LIBERO-Long, but remains below VGAS on average. These results further support the effectiveness of value-guided action-chunk selection in few-shot adaptation.

\subsection{Scalability Across Demonstration Budgets}
\label{app:shot_scaling}

\begin{figure}[t]
  \centering
  \includegraphics[width=0.8\textwidth]{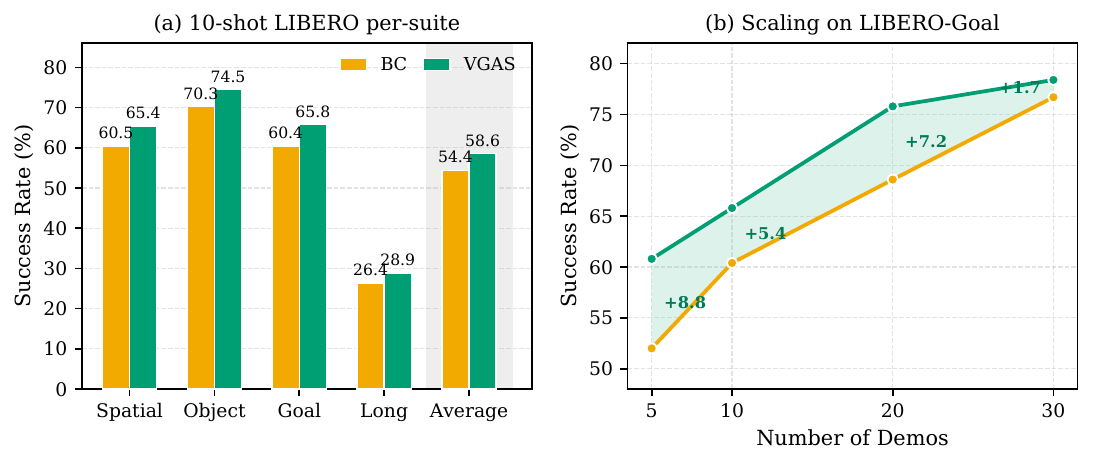}
  \caption{Scalability across demonstration budgets. 
  (a) Per-suite results under the 10-shot LIBERO setting. 
  (b) Scaling trend on LIBERO-Goal with 5, 10, 20, and 30 demonstrations. 
  VGAS consistently improves over the BC baseline, with larger gains in lower-data regimes and gradually diminishing margins as more demonstrations are provided.}
  \label{fig:shot_scaling}
  \vspace{-0.2cm}
\end{figure}

We further evaluate VGAS under different demonstration budgets to assess its scalability beyond the default 5-shot setting. As shown in Fig.~\ref{fig:shot_scaling}(a), VGAS consistently improves over the BC baseline across all four LIBERO suites in the 10-shot setting. Fig.~\ref{fig:shot_scaling}(b) further shows that VGAS maintains a positive improvement over BC on LIBERO-Goal across 5, 10, 20, and 30 demonstrations.

The gains are generally larger in low-data regimes, where the base policy remains semantically plausible but geometrically under-constrained. As the number of demonstrations increases, the BC baseline becomes more accurate, reducing the potential improvement from value-guided selection. This trend is consistent with our motivation. VGAS is particularly effective in few-shot regimes where the base policy has not yet learned sufficiently precise geometric control, but still retains enough local recall to generate candidates close to successful behavior. In such cases, value-guided selection can identify a better action chunk from the candidate set.

\subsection{Hyperparameter Sensitivity}
\label{app:hyper_sensitivity}

We evaluate the sensitivity of VGAS to key critic hyperparameters on LIBERO-Goal under the 5-shot setting. We conduct a controlled one-at-a-time sweep over $\lambda$, $\beta$, and $\eta$, fixing the remaining coefficients to a common reference setting. All runs use Best-of-$N$ inference with $N=8$ and results are averaged over five evaluation seeds. As shown in Fig.~\ref{fig:param_sensitivity}, VGAS remains stable across all three sweeps. The success rate varies only mildly, with ranges of $1.80$, $0.72$, and $2.32$ points for $\lambda$, $\beta$, and $\eta$, respectively, and consistently stays above the BC baseline.

\begin{figure}[H]
  \centering
  \includegraphics[width=1.0\textwidth]{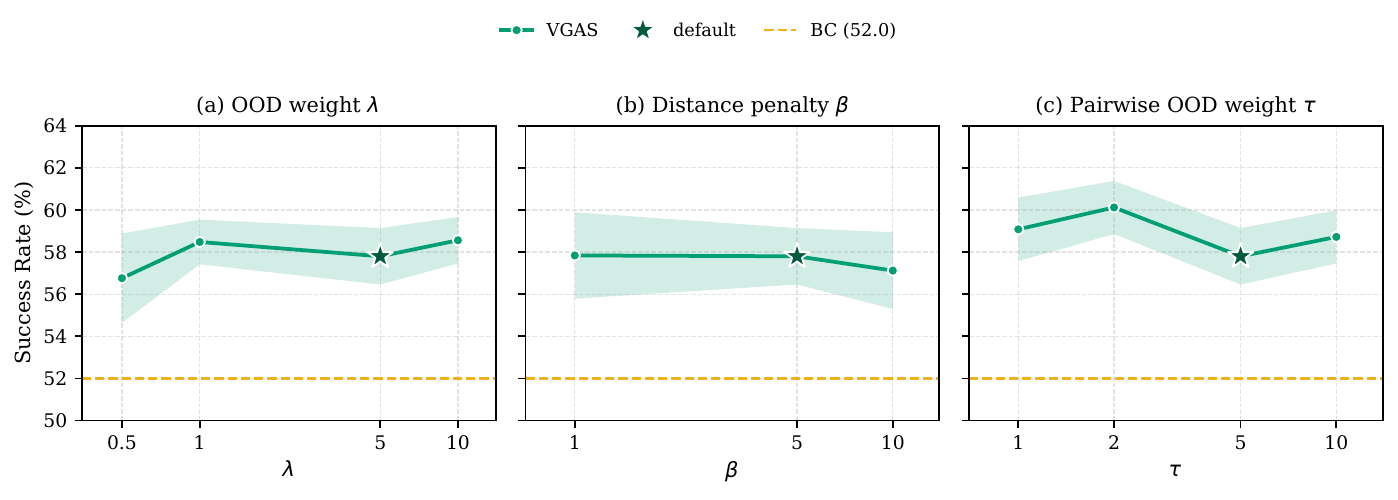}
  \caption{Hyperparameter sensitivity on LIBERO-Goal under the 5-shot setting. 
We sweep one coefficient at a time: 
(a) $\lambda \in \{0.5,1,5,10\}$ with $\beta=5,\eta=5$; 
(b) $\beta \in \{1,5,10\}$ with $\lambda=5,\eta=5$; 
and (c) $\eta \in \{1,2,5,10\}$ with $\lambda=5,\beta=5$. 
The green star marks the common reference setting used in this sensitivity study, and the orange dashed line denotes the BC baseline.}
  \label{fig:param_sensitivity}
  \vspace{-0.2cm}
\end{figure}

\end{document}